\newtheorem{theorem}{Theorem}[section]
\newtheorem{definition}[theorem]{Definition}
\newtheorem{proposition}[theorem]{Proposition}
\newtheorem{lemma}[theorem]{Lemma}
\newtheorem{corollary}[theorem]{Corollary}
\title{Generalization of the Gibbs algorithm with high probability at low temperatures}
\author{%
  Andreas Maurer\\
  Istituto Italiano di Tecnologia, CSML, 16163 Genoa, Italy\\
  \texttt{am@andreas-maurer.eu} \\
  % examples of more authors
  % \And
  % Coauthor \\
  % Affiliation \\
  % Address \\
  % \texttt{email} \\
  % \AND
  % Coauthor \\
  % Affiliation \\
  % Address \\
  % \texttt{email} \\
  % \And
  % Coauthor \\
  % Affiliation \\
  % Address \\
  % \texttt{email} \\
  % \And
  % Coauthor \\
  % Affiliation \\
  % Address \\
  % \texttt{email} \\
}
\begin{document}

\maketitle

\begin{abstract}
The paper gives a bound on the generalization error of the Gibbs algorithm,
which recovers known data-independent bounds for the high temperature range
and extends to the low-temperature range, where generalization depends
critically on the data-dependent loss-landscape. It is shown, that with high probability the
generalization error of a single hypothesis drawn from the Gibbs posterior
decreases with the total prior volume of all hypotheses with similar or
smaller empirical error. This gives theoretical support to the belief in the
benefit of flat minima. The zero temperature limit is discussed and the
bound is extended to a class of similar stochastic algorithms.
\end{abstract}

\section{Introduction}

Controlling the difference between the empirical error and the expected
future error of a hypothesis is a fundamental problem of learning theory.
This paper gives high probability bounds on this generalization gap for
individual hypotheses drawn from the Gibbs posterior. The Gibbs posterior
assigns probabilities, which decrease exponentially with the hypothesis'
empirical error, relative to some prior reference measure. Such
distributions are the minimizers of the PAC-Bayesian bound (\cite%
{mcallester1999pac}) and limiting distributions of stochastic gradient
Langevin dynamics (SGLD, \cite{raginsky2017non}). It has been argued by \cite%
{zhang2018theory}, that the popular method of stochastic gradient descent
(SGD) may also be reinterpreted as a form SGLD, and is thus also related to
the Gibbs-posterior. The Gibbs-algorithm, which generates the posterior from
data, is therefore an important theoretical construction in the study of the
generalization properties of several stochastic algorithms applied to
non-convex learning tasks.

There are various known bounds, both on averages and on single hypotheses
drawn from the posterior (\cite{lever2013tighter}, \cite{raginsky2017non}, 
\cite{kuzborskij2019distribution}, \cite{rivasplata2020pac}, \cite%
{aminian2021exact}, \cite{maurer2024generalization}), but most of these
results become vacuous, when the inverse temperature parameter $\beta $,
which governs the exponential decay of probabilities, exceeds the number $n$
of training examples. For very difficult data, or randomly permuted labels,
this correctly predicts the failure of generalization. For easier data,
however, generalization persists in the low temperature regime $\beta >n$.
This has been experimentally observed for example by \cite{dziugaite2018data}%
, and nicely documented in Figure 1, Section 6 of the respective article.
Any bound which retains explanatory power in the low temperature regime,
must therefore be data- or distribution-dependent.

The bound given here applies with high probability to a single hypothesis
drawn from the Gibbs-posterior. This is an important feature, because after
the laborious processes of SGLD or SGD the final result is the draw of an
individual hypothesis. The bound also predicts better generalization for the
chosen hypothesis, whenever the \textit{total} prior reference volume of
hypotheses with similar or smaller empirical error is large, providing a
partial explanation of the frequent observation, that hypotheses in wide
minima generalize well (\cite{hochreiter1997flat}, \cite{keskar2016large}, 
\cite{wu2017towards}, \cite{zhang2021flatness}).

While recovering and potentially improving on existing results for the high
temperature regime, our result can also guarantee generalization in the zero
temperature limit $\beta \rightarrow \infty $, whenever the set of
hypotheses with minimal empirical risk has positive prior measure. In the
context of binary classification we show that this is the case, whenever the
data has a hard margin and the prior a positive density.\bigskip 

Following a section introducing the necessary notation and definitions the
main result is stated and proved. Section \ref{Section Interpretation}
discusses the implications of this bound in the regimes of high and low
temperature, the zero-temperature limit and the dependence on the underlying
data-distribution. Section \ref{Section Other bounds} gives some more
concrete bounds, Section \ref{Section beyond Gibbs} extends the main result
to more general stochastic algorithms, and Section \ref{Section related work}
summarizes some related literature.

\section{Preliminaries\label{Section preliminaries}}

Throughout the following $\left( \mathcal{X},\Sigma \right) $ is a
measurable space of \textit{data} with probability measure $\mu $. The iid
random vector $\mathbf{x}\sim \mu ^{n}$ is the training sample.

$\left( \mathcal{H},\Omega \right) $ is a measurable space of \textit{%
	hypotheses}, and there is a measurable loss function $\ell :\mathcal{H\times
	X}\rightarrow \left[ 0,\infty \right) $. Members of $\mathcal{H}$ are
denoted $h$ or $g$. We write $L\left( h\right) :=\mathbb{E}_{X\sim \mu }%
\left[ \ell \left( h,x\right) \right] $ and $\hat{L}\left( h,\mathbf{x}%
\right) :=\left( 1/n\right) \sum_{i}\ell \left( h,x_{i}\right) $
respectively for the true (expected) and empirical loss of hypothesis $h\in 
\mathcal{H}$.

The set of probability measures on $\left( \mathcal{H},\Omega \right) $ is
denoted $\mathcal{P}\left( \mathcal{H}\right) $. There is an a-priori
reference measure $\pi \in \mathcal{P}\left( \mathcal{H}\right) $, called
the \textit{prior}. We write $L_{\min }=~$ess $\inf_{h\in \mathcal{H}%
}L\left( h\right) $ and $\hat{L}_{\min }\left( \mathbf{x}\right) =~$ess $%
\inf_{h\in \mathcal{H}}\hat{L}\left( h,\mathbf{x}\right) $, where the
essential infimum refers to the measure $\pi $. We also write $\mathcal{H}%
_{\min }=\left\{ h:L\left( h\right) =L_{\min }\right\} $ and $\widehat{%
	\mathcal{H}}_{\min }\left( \mathbf{x}\right) =\left\{ h:\hat{L}\left( h,%
\mathbf{x}\right) =\hat{L}_{\min }\left( \mathbf{x}\right) \right\} $ for
the respective sets of global minimizers. For $r\in \mathbb{R}$ we also
denote with $\varphi \left( r\right) =\pi \left\{ g:L\left( g\right) \leq
r\right\} $ and $\hat{\varphi}\left( r,\mathbf{x}\right) =\pi \left\{ g:\hat{%
	L}\left( g,\mathbf{x}\right) \leq r\right\} $ the cumulative distribution
functions of the true and empirical loss respectively.

\bigskip
The Gibbs algorithm at inverse temperature $\beta >0$ is the map $\hat{G}%
_{\beta }:\mathbf{x}\in \mathcal{X}^{n}\mapsto \hat{G}_{\beta }\left( 
\mathbf{x}\right) \in \mathcal{P}\left( \mathcal{H}\right) $ defined by%
\begin{equation*}
	\hat{G}_{\beta }\left( \mathbf{x}\right) \left( A\right) =\frac{1}{Z_{\beta
		}\left( \mathbf{x}\right) }\int_{A}e^{-\beta \hat{L}\left( h,\mathbf{x}%
		\right) }d\pi \left( h\right) \text{ for }A\in \Omega \text{.}
\end{equation*}%
$\hat{G}_{\beta }\left( \mathbf{x}\right) $ is called the \textit{%
	Gibbs-posterior, }the normalizing factor 
\begin{equation*}
	Z_{\beta }\left( \mathbf{x}\right) :=\int_{\mathcal{H}}e^{-\beta \hat{L}%
		\left( h,\mathbf{x}\right) }d\pi \left( h\right)
\end{equation*}%
is called the \textit{partition function}. 

We define a probability measure $%
\rho $ on $\mathcal{H}\times \mathcal{X}^{n}$ by 
\begin{equation}
	\rho \left( A\right) =\mathbb{E}_{\mathbf{x}\sim \mu ^{n}}\mathbb{E}_{h\sim 
		\hat{G}_{\beta }\left( \mathbf{x}\right) }\left[ 1_{A}\left( h,\mathbf{x}%
	\right) \right] \text{ for }A\in \Omega \otimes \Sigma ^{\otimes n}.
	\label{Define rho}
\end{equation}%
Then $\mathbb{E}_{\left( h,\mathbf{x}\right) \sim \rho }\left[ \phi \left( h,%
\mathbf{x}\right) \right] =\mathbb{E}_{\mathbf{x}}\mathbb{E}_{h\sim \hat{G}%
	_{\beta }\left( \mathbf{x}\right) }\left[ \phi \left( h,\mathbf{x}\right) %
\right] $ for measurable $\phi :\mathcal{H}\times \mathcal{X}^{n}\rightarrow 
\mathbb{R}$.

The relative entropy of two Bernoulli variables with expectations $p$ and $q$
is denoted 
\begin{equation}
	\kappa \left( p,q\right) =p\ln \frac{p}{q}+\left( 1-p\right) \ln \frac{1-p}{%
		1-q}.  \label{define little kl}
\end{equation}%
\cite{tolstikhin2013pac} give the inversion rule $\kappa \left( p,q\right)
\leq B$ $\implies $ $q-p\leq \sqrt{2pB}+2B$.

\section{A generic generalization bound\label{Section Main result}}

In applications the otherwise arbitrary function $F$ in the following
theorem is a place-holder for a scalar multiple of the generalization gap.

\begin{theorem}
	\label{Theorem Main}Let $F:\mathcal{H}\times \mathcal{X}^{n}\rightarrow 
	\mathbb{R}$ be some measurable function and $\delta >0$. Then with
	probability at least $1-\delta $ in $\mathbf{x}\sim \mu ^{n}$ and $h\sim 
	\hat{G}_{\beta }\left( \mathbf{x}\right) $ 
	\begin{equation*}
		F\left( h,\mathbf{x}\right) \leq \inf_{r\in \mathbb{R}}\beta r+\ln \frac{1}{%
			\hat{\varphi}\left( \hat{L}\left( h,\mathbf{x}\right) +r,\mathbf{x}\right) }%
		+\ln \mathbb{E}_{\mathbf{x}}\mathbb{E}_{g\sim \pi }\left[ e^{F\left( g,%
			\mathbf{x}\right) }\right] +\ln \left( 1/\delta \right) .
	\end{equation*}
\end{theorem}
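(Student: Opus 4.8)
The plan is to start from a change of measure / Donsker–Varadhan type argument that relates an expectation under the Gibbs posterior $\hat G_\beta(\mathbf{x})$ to an expectation under the prior $\pi$, and then to control the resulting Radon–Nikodym derivative from below using the cumulative empirical-loss function $\hat\varphi$. Concretely, for a fixed $\mathbf{x}$ and any measurable event $A\subseteq\mathcal H$, the Gibbs posterior satisfies $\hat G_\beta(\mathbf{x})(A)=Z_\beta(\mathbf{x})^{-1}\int_A e^{-\beta\hat L(h,\mathbf{x})}\,d\pi(h)$. The key observation is that on the set $\{g:\hat L(g,\mathbf{x})\le \hat L(h,\mathbf{x})+r\}$ the integrand $e^{-\beta\hat L(g,\mathbf{x})}$ is at least $e^{-\beta(\hat L(h,\mathbf{x})+r)}$, so $Z_\beta(\mathbf{x})\ge e^{-\beta(\hat L(h,\mathbf{x})+r)}\,\hat\varphi(\hat L(h,\mathbf{x})+r,\mathbf{x})$. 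This is the lower bound on the partition function that will produce the $\beta r+\ln(1/\hat\varphi)$ terms.

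Next I would set up the probabilistic inequality. The quantity to bound is $\mathbb{E}_{(h,\mathbf{x})\sim\rho}[\,\cdot\,]$, so it is natural to apply Markov's inequality to a nonnegative random variable of the form $e^{F(h,\mathbf{x})-\text{(correction)}}$ under $\rho$, where the correction absorbs exactly the terms that will show up on the right-hand side. Writing out $d\rho = d\hat G_\beta(\mathbf{x})\,d\mu^n$ and using the density of $\hat G_\beta(\mathbf{x})$ with respect to $\pi$, we get
\begin{equation*}
\mathbb{E}_{(h,\mathbf{x})\sim\rho}\!\left[\frac{e^{F(h,\mathbf{x})}\,Z_\beta(\mathbf{x})\,e^{\beta\hat L(h,\mathbf{x})}}{\text{(something)}}\right]=\mathbb{E}_{\mathbf{x}}\,\mathbb{E}_{g\sim\pi}\!\left[\frac{e^{F(g,\mathbf{x})}}{\text{(something)}}\right].
\end{equation*}
Choosing "something" $= \mathbb{E}_{\mathbf{x}}\mathbb{E}_{g\sim\pi}[e^{F(g,\mathbf{x})}]$ makes the right-hand side equal to $1$, and then Markov gives that with probability at least $1-\delta$ under $\rho$,
\begin{equation*}
e^{F(h,\mathbf{x})}\,Z_\beta(\mathbf{x})\,e^{\beta\hat L(h,\mathbf{x})}\le \frac{1}{\delta}\,\mathbb{E}_{\mathbf{x}}\mathbb{E}_{g\sim\pi}\big[e^{F(g,\mathbf{x})}\big].
\end{equation*}
Taking logarithms, rearranging, and substituting the lower bound $Z_\beta(\mathbf{x})\ge e^{-\beta(\hat L(h,\mathbf{x})+r)}\hat\varphi(\hat L(h,\mathbf{x})+r,\mathbf{x})$ yields, for the particular $r$ used,
\begin{equation*}
F(h,\mathbf{x})\le \beta r+\ln\frac{1}{\hat\varphi(\hat L(h,\mathbf{x})+r,\mathbf{x})}+\ln\mathbb{E}_{\mathbf{x}}\mathbb{E}_{g\sim\pi}\big[e^{F(g,\mathbf{x})}\big]+\ln(1/\delta).
\end{equation*}
Since this holds for every fixed $r$ (or for a well-chosen $r$ depending only on $h,\mathbf{x}$ — see below), one passes to the infimum over $r\in\mathbb{R}$.

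The main subtlety, and the step I would be most careful about, is the quantifier order on $r$: the clean Markov argument gives the bound for each \emph{fixed} $r$ with its own exceptional set of probability $\delta$, whereas the theorem asserts a single event of probability $1-\delta$ on which the inequality holds simultaneously for the infimum over all $r$. The resolution is that the partition-function lower bound $Z_\beta(\mathbf{x})\ge e^{-\beta(\hat L(h,\mathbf{x})+r)}\hat\varphi(\hat L(h,\mathbf{x})+r,\mathbf{x})$ is a \emph{deterministic, pointwise} inequality valid for all $r$ at once; the only probabilistic step is the Markov bound on $e^{F(h,\mathbf{x})}Z_\beta(\mathbf{x})e^{\beta\hat L(h,\mathbf{x})}$, which involves no $r$ at all. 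So one first gets, on a single good event of probability $1-\delta$, the bound $F(h,\mathbf{x})\le \beta\hat L(h,\mathbf{x})+\ln(1/Z_\beta(\mathbf{x}))+\ln\mathbb{E}_{\mathbf{x}}\mathbb{E}_{g\sim\pi}[e^{F(g,\mathbf{x})}]+\ln(1/\delta)$, and \emph{then} bounds $-\ln Z_\beta(\mathbf{x})\le \beta r-\beta\hat L(h,\mathbf{x})+\ln(1/\hat\varphi(\hat L(h,\mathbf{x})+r,\mathbf{x}))$ for every $r$, taking the infimum at the end. One should also check measurability of $\hat\varphi(\cdot,\mathbf{x})$ and of the infimum (monotonicity of $\hat\varphi$ in its first argument makes this routine), and note the edge case where $\hat\varphi(\hat L(h,\mathbf{x})+r,\mathbf{x})=0$, for which the bound is vacuously true.
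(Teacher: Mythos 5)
Your proposal is correct and follows essentially the same route as the paper: apply Markov's inequality under $\rho$ to the random variable $e^{F(h,\mathbf{x})+\beta\hat L(h,\mathbf{x})+\ln Z_\beta(\mathbf{x})}$ (whose $\rho$-expectation collapses to $\mathbb{E}_{\mathbf{x}}\mathbb{E}_{g\sim\pi}[e^{F(g,\mathbf{x})}]$ by the change of measure), then substitute the deterministic partition-function lower bound $Z_\beta(\mathbf{x})\ge e^{-\beta(\hat L(h,\mathbf{x})+r)}\hat\varphi(\hat L(h,\mathbf{x})+r,\mathbf{x})$ and take the infimum over $r$. Your explicit remark about the quantifier order on $r$ — that the Markov step is $r$-free and the partition-function bound is pointwise in $r$, so the infimum is legitimate on a single good event — is exactly the reason the paper's proof is sound, and you articulated it more carefully than the paper does.
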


\begin{proof}
	By Markov's inequality (Appendix \ref{Section Markov inequality}, Lemma \ref%
	{Markov inequality} (i)) for any real random variable $Y$ 
	\begin{equation*}
		\Pr \left\{ Y>\ln \mathbb{E}\left[ e^{Y}\right] +\ln \left( 1/\delta \right)
		\right\} \leq \delta .
	\end{equation*}%
	We apply this to the random variable $Y=F\left( h,\mathbf{x}\right) +\beta 
	\hat{L}\left( h,\mathbf{x}\right) +\ln Z_{\beta }\left( \mathbf{x}\right) $
	on the probability space $\left( \mathcal{H\times X}^{n},\Omega \otimes
	\Sigma ^{\otimes n},\rho \right) $ as defined in (\ref{Define rho}).
	Together with the definition of the Gibbs-posterior this gives, with
	probability at least $1-\delta $ in $\left( h,\mathbf{x}\right) \sim \rho $
	(equivalent to saying $\mathbf{x}\sim \mu ^{n}$ and $h\sim \hat{G}_{\beta
	}\left( \mathbf{x}\right) $),%
	\begin{align*}
		& F\left( h,\mathbf{x}\right) +\beta \hat{L}\left( h,\mathbf{x}\right) +\ln
		Z_{\beta }\left( \mathbf{x}\right) \\
		& \leq \ln \mathbb{E}_{\mathbf{x}}\mathbb{E}_{g\sim \hat{G}_{\beta }\left( 
			\mathbf{x}\right) }\left[ e^{F\left( g,\mathbf{x}\right) +\beta \hat{L}%
			\left( g,\mathbf{x}\right) +\ln Z_{\beta }\left( \mathbf{x}\right) }\right]
		+\ln \left( 1/\delta \right) \\
		& =\ln \mathbb{E}_{\mathbf{x}}\mathbb{E}_{g\sim \pi }\left[ e^{F\left( g,%
			\mathbf{x}\right) +\beta \hat{L}\left( g,\mathbf{x}\right) +\ln Z_{\beta
			}\left( \mathbf{x}\right) -\beta \hat{L}\left( g,\mathbf{x}\right) -\ln
			Z_{\beta }\left( \mathbf{x}\right) }\right] +\ln \left( 1/\delta \right) \\
		& =\ln \mathbb{E}_{\mathbf{x}}\mathbb{E}_{g\sim \pi }\left[ e^{F\left( g,%
			\mathbf{x}\right) }\right] +\ln \left( 1/\delta \right) .
	\end{align*}%
	Subtract $\beta \hat{L}\left( h,\mathbf{x}\right) +\ln Z_{\beta }\left( 
	\mathbf{x}\right) $ to get%
	\begin{equation}
		F\left( h,\mathbf{x}\right) \leq -\beta \hat{L}\left( h,\mathbf{x}\right)
		-\ln Z_{\beta }\left( \mathbf{x}\right) +\ln \mathbb{E}_{\mathbf{x}}\mathbb{E%
		}_{g\sim \pi }\left[ e^{F\left( g,\mathbf{x}\right) }\right] +\ln \left(
		1/\delta \right) .  \label{Main inequality final form}
	\end{equation}
	
	For any $r\in \mathbb{R}$ and $h\in \mathcal{H}$ we can lower bound the
	partition function by%
	\begin{eqnarray*}
		Z_{\beta }\left( \mathbf{x}\right) &\geq &\int_{\left\{ g:\hat{L}\left( g,%
			\mathbf{x}\right) \leq \hat{L}\left( h,\mathbf{x}\right) +r\right\}
		}e^{-\beta \hat{L}\left( g,\mathbf{x}\right) }d\pi \left( g\right) \\
		&\geq &e^{-\beta \left( \hat{L}\left( h,\mathbf{x}\right) +r\right) }\hat{%
			\varphi}\left( \hat{L}\left( h,\mathbf{x}\right) +r,\mathbf{x}\right) .
	\end{eqnarray*}%
	It follows that%
	\begin{equation*}
		-\beta \hat{L}\left( h,\mathbf{x}\right) -\ln Z_{\beta }\left( \mathbf{x}%
		\right) \leq \inf_{r\in \mathbb{R}}\beta r+\ln \frac{1}{\hat{\varphi}\left( 
			\hat{L}\left( h,\mathbf{x}\right) +r,\mathbf{x}\right) }.
	\end{equation*}%
	Substitution in (\ref{Main inequality final form}) completes the
	proof.\bigskip
\end{proof}

The first step in the proof, the application of Markov's inequality,
produces at once a disintegrated PAC-Bayesian bound (like Theorem 1 (i) of 
\cite{rivasplata2020pac}) as applied to the Gibbs posterior.

The second step in the proof, the lower bound on the partition function with
the cumulative distribution function of the empirical loss, weakens this
result, but serves the purpose of interpretability. In Section \ref{Section
	beyond Gibbs} this simple method is extended to other data-dependent
distributions.

A similar result to Theorem \ref{Theorem Main} is given by \cite%
{viallard2024leveraging}, with the principal difference that the parameter $%
r $ becomes the difference $\hat{L}\left( h^{\prime },\mathbf{x}\right) -%
\hat{L}\left( h,\mathbf{x}\right) $, where $h^{\prime }$ is some test
hypothesis, and $\hat{\varphi}\left( \hat{L}\left( h,\mathbf{x}\right) +r,%
\mathbf{x}\right) $ is equated as part of the confidence parameter $\delta $
and combined with (\ref{Main inequality final form}) in a union
bound.\bigskip

To simplify the statement of some corollaries we introduce the hypothesis-
and data-dependent complexity measure $\Lambda _{\beta }:\mathcal{H\times X}%
^{n}\rightarrow \left[ 0,\infty \right) $%
\begin{equation}
	\Lambda _{\beta }\left( h,\mathbf{x}\right) :=\inf_{r\in \mathbb{R}}\beta
	r+\ln \frac{1}{\hat{\varphi}\left( \hat{L}\left( h,\mathbf{x}\right) +r,%
		\mathbf{x}\right) },  \label{Define Lambda}
\end{equation}%
so the inequality in Theorem \ref{Theorem Main} can be written%
\begin{equation}
	F\left( h,\mathbf{x}\right) \leq \Lambda _{\beta }\left( h,\mathbf{x}\right)
	+\ln \mathbb{E}_{\mathbf{x}}\mathbb{E}_{g\sim \pi }\left[ e^{F\left( g,%
		\mathbf{x}\right) }\right] +\ln \left( 1/\delta \right) .
	\label{Main bound short version}
\end{equation}

For a first application let the loss $\ell $ have values in $\left[ 0,1%
\right] $ and set $F\left( h,\mathbf{x}\right) =n~\kappa \left( \hat{L}%
\left( h,\mathbf{x}\right) ,L\left( h\right) \right) $, with $\kappa $ the
relative entropy as in (\ref{define little kl}). The two expectations above
can be interchanged, and from Theorem \textbf{\ref{Theorem Main}} of \cite%
{maurer2004note} we get $\mathbb{E}_{\mathbf{x}}\left[ e^{n~\kappa \left( 
	\hat{L}\left( h,\mathbf{x}\right) ,L\left( h\right) \right) }\right] \leq 2%
\sqrt{n}$. Substitution in Theorem \ref{Theorem Main} and division by $n$
then give the following.

\begin{corollary}
	\label{Corollary kl}Assume that $\ell $ has values in $\left[ 0,1\right] $
	and let $\delta >0$ and $n\geq 8$. Then with probability at least $1-\delta $
	in $\mathbf{x}\sim \mu ^{n}$ and $h\sim \hat{G}_{\beta }\left( \mathbf{x}%
	\right) $%
	\begin{equation*}
		\kappa \left( \hat{L}\left( h,\mathbf{x}\right) ,L\left( h\right) \right)
		\leq \frac{1}{n}\left( \Lambda _{\beta }\left( h,\mathbf{x}\right) +\ln
		\left( \frac{2\sqrt{n}}{\delta }\right) \right) .
	\end{equation*}
\end{corollary}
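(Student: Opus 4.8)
The plan is to instantiate the short form (\ref{Main bound short version}) of Theorem \ref{Theorem Main} with the particular choice
\begin{equation*}
F\left( h,\mathbf{x}\right) = n\,\kappa\left( \hat{L}\left( h,\mathbf{x}\right), L\left( h\right) \right),
\end{equation*}
which is measurable and, because $\ell$ has values in $\left[0,1\right]$, has $\hat{L}$ and $L$ in $\left[0,1\right]$, so that $\kappa$ is well defined and nonnegative. With this $F$ the bound (\ref{Main bound short version}) states that with probability at least $1-\delta$ in $\mathbf{x}\sim\mu^{n}$ and $h\sim\hat{G}_{\beta}\left(\mathbf{x}\right)$,
\begin{equation*}
n\,\kappa\left( \hat{L}\left( h,\mathbf{x}\right), L\left( h\right) \right) \leq \Lambda_{\beta}\left( h,\mathbf{x}\right) + \ln \mathbb{E}_{\mathbf{x}}\mathbb{E}_{g\sim\pi}\left[ e^{n\,\kappa\left( \hat{L}\left( g,\mathbf{x}\right), L\left( g\right) \right)} \right] + \ln\left(1/\delta\right),
\end{equation*}
so everything comes down to bounding the exponential moment factor on the right.

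For that I would first use Tonelli's theorem to interchange the two expectations --- legitimate since the integrand $e^{n\,\kappa(\hat{L}(g,\mathbf{x}),L(g))}$ is jointly measurable and nonnegative --- obtaining
\begin{equation*}
\mathbb{E}_{\mathbf{x}}\mathbb{E}_{g\sim\pi}\left[ e^{n\,\kappa\left( \hat{L}\left( g,\mathbf{x}\right), L\left( g\right) \right)} \right] = \mathbb{E}_{g\sim\pi}\mathbb{E}_{\mathbf{x}\sim\mu^{n}}\left[ e^{n\,\kappa\left( \hat{L}\left( g,\mathbf{x}\right), L\left( g\right) \right)} \right].
\end{equation*}
For each fixed $g$, $\hat{L}\left( g,\mathbf{x}\right)$ is the average of $n$ iid random variables in $\left[0,1\right]$ with mean $L\left(g\right)$, so the exponential moment bound for the binary relative entropy of an empirical mean from \cite{maurer2004note} gives $\mathbb{E}_{\mathbf{x}}\!\left[ e^{n\,\kappa(\hat{L}(g,\mathbf{x}),L(g))} \right] \leq 2\sqrt{n}$ as soon as $n\geq 8$. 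Since this holds for every $g$, taking the expectation over $g\sim\pi$ preserves it, so the whole factor is at most $2\sqrt{n}$.

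Substituting $\ln \mathbb{E}_{\mathbf{x}}\mathbb{E}_{g\sim\pi}\!\left[ e^{F(g,\mathbf{x})} \right] \leq \ln\left(2\sqrt{n}\right)$ into the displayed inequality and dividing through by $n$ yields the claimed bound. Once Theorem \ref{Theorem Main} is available the argument is essentially bookkeeping; the only two points that need care are the measurability/Tonelli justification for swapping the order of integration and the observation that the constant $2\sqrt{n}$ in the moment inequality of \cite{maurer2004note} is precisely what forces the hypothesis $n\geq 8$. I do not anticipate a genuine obstacle --- the mathematical content sits entirely in Theorem \ref{Theorem Main} and in the cited moment bound.
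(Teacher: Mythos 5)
Your proof is correct and follows the paper's own derivation essentially verbatim: choose $F(h,\mathbf{x}) = n\,\kappa(\hat{L}(h,\mathbf{x}),L(h))$, swap the two expectations by Tonelli, invoke the $2\sqrt{n}$ exponential-moment bound from \cite{maurer2004note} (which is where $n\geq 8$ enters), substitute into Theorem \ref{Theorem Main}, and divide by $n$. No gaps.
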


Using the inversion rule for $\kappa $ this inequality implies%
\begin{equation*}
	L\left( h\right) -\hat{L}\left( h,\mathbf{x}\right) \leq \sqrt{\frac{2\hat{L}%
			\left( h,\mathbf{x}\right) }{n}\left( \Lambda _{\beta }\left( h,\mathbf{x}%
		\right) +\ln \left( \frac{2\sqrt{n}}{\delta }\right) \right) }+\frac{2}{n}%
	\left( \Lambda _{\beta }\left( h,\mathbf{x}\right) +\ln \left( \frac{2\sqrt{n%
	}}{\delta }\right) \right) .
\end{equation*}%
Ignoring the logarithmic term this gives an approximate rate of $\Lambda
_{\beta }\left( h,\mathbf{x}\right) /n$ for small $\hat{L}\left( h,\mathbf{x}%
\right) $.

This is not the only bound which can be derived from Theorem \ref{Theorem
	Main}. Results for unbounded losses (sub-Gaussian or sub-exponential) are
given in Section \ref{Section Other bounds}. We conclude this section with
some superficial remarks on the quantity $\Lambda _{\beta }\left( h,\mathbf{x%
}\right) $ and Theorem \textbf{\ref{Theorem Main}}.

1. Without infimum the right hand side of (\ref{Define Lambda}) is infinite
for $r<-\hat{L}\left( h,\mathbf{x}\right) $. As $r$ ranges from $-\hat{L}%
\left( h,\mathbf{x}\right) $ to $+\infty $, the first term increases from $%
r=-\beta \hat{L}\left( h,\mathbf{x}\right) $ to $+\infty $, while the second
tern is non-increasing and descends from $+\infty $ to zero. The infimum is
finite and approximated (or attained) in the interval $\left( -\hat{L}\left(
h,\mathbf{x}\right) ,+\infty \right) $.

2. $\Lambda _{\beta }\left( h,\mathbf{x}\right) $ is a random variable in
its dependence on $\mathbf{x}\sim \mu ^{n}$ and $h\sim \hat{G}_{\beta
}\left( \mathbf{x}\right) $. It is non-increasing in $\hat{L}\left( h,%
\mathbf{x}\right) $, which makes some intuitive sense, since for finite $%
\beta $ we can sample arbitrarily large values of $\hat{L}\left( h,\mathbf{x}%
\right) $ within the range of the loss function. If $\hat{L}\left( h,\mathbf{%
	x}\right) $ is larger, we expect the gap to the true loss $L\left( h\right) $
to be smaller.

3. The distributions generated by commonly used stochastic algorithms 
\textit{only approximate} the Gibbs-posterior. Suppose we draw from an
approximating sequence $\zeta _{m}\left( \mathbf{x}\right) \in \mathcal{P}%
\left( \mathcal{H}\right) $ instead of $\hat{G}_{\beta }\left( \mathbf{x}%
\right) $, and $\Delta _{\mathcal{F}}\left( \zeta _{m},\hat{G}_{\beta
}\left( \mathbf{x}\right) \right) \rightarrow 0$ for some integral
probability metric $\Delta _{\mathcal{F}}$ defined by some function class $%
\mathcal{F}$ (Definition \ref{Definition Integral probability metric} in
Appendix \ref{Section Markov inequality}). If there exists $\gamma <\infty $
such $\gamma ^{-1}\exp \left( F\left( .,\mathbf{x}\right) +\beta \hat{L}%
\left( .,\mathbf{x}\right) \right) \in \mathcal{F}$, then we retain from
Lemma \ref{Markov inequality} (ii) (in Appendix \ref{Section Markov
	inequality}) that in the limit $m\rightarrow \infty $ we get the same bound
on $F\left( .,\mathbf{x}\right) $ as in Theorem \textbf{\ref{Theorem Main}}.

\section{Interpretation of Theorem \textbf{\protect\ref{Theorem Main}}\label%
	{Section Interpretation}}

The most important consequence of Theorem \ref{Theorem Main} is an at first
puzzling cooperative phenomenon: generalization of a randomly chosen
individual hypothesis benefits from the \textit{total} prior volume of
hypotheses with similar, or smaller empirical loss. The situation is
depicted in Figure \ref{Landscape picture}.
\begin{figure}[h!]
	\centering
	% b = bottom, t=top, 
	\includegraphics[width=0.7\textwidth]{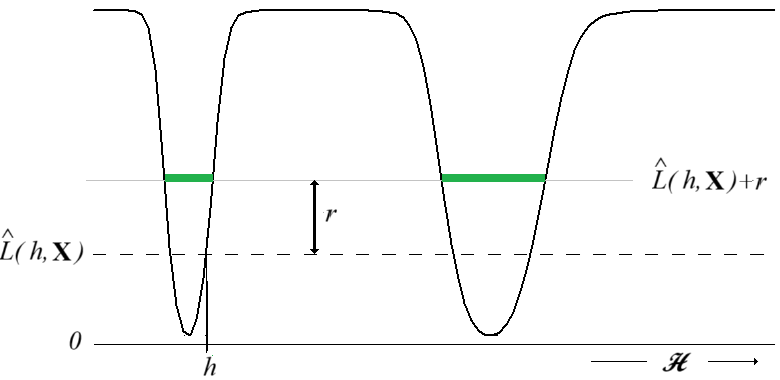}  
	\caption{Schematic representation of the loss landscape, with the prior being the length of horizontal intervals. $h$ is drawn from the Gibbs-posterior and the total length of the thick green lines contributes to the prior volume and thus to generalization. Notice that for large $\beta$ and large ${\hat L}(h,\bf X)$ the optimal $r$ can also be negative.}
	\label{Landscape picture}
\end{figure}
\subsection{Wide minima}

Theorem \textbf{\ref{Theorem Main}} predicts better generalization, if the
near minimal hypotheses, when averaged over the prior, have larger volume.
Formally, if $r^{\ast }$ is the minimizer in the definition of $\Lambda $,
and the $C_{i}\subseteq \mathcal{H}$ are disjoint components of $\left\{ g:%
\hat{L}\left( g,\mathbf{x}\right) \leq \hat{L}\left( h,\mathbf{x}\right)
+r^{\ast }\right\} $, then the $C_{i\text{ }}$maximizing $\pi \left(
C_{i}\right) $ make the greatest contribution to generalization, because $%
\Lambda _{\beta }\left( h,\mathbf{x}\right) =\beta r^{\ast }-\ln \sum_{i}\pi
\left( C_{i}\right) $. If $\mathcal{H}$ is parametrized by $\mathbb{R}^{d}$
with smooth loss, and the $C_{i}$ correspond to basins of attraction of
multiple minima, then $\pi \left( C_{i}\right) $ can be interpreted as
"flatness" or "width". If, for example, $\pi $ is an isotropic Gaussian of
width $\sigma $, and $d$ is very large, then $\pi $ is exponentially
concentrated on a sphere $\mathcal{S}$ of radius $\sigma \sqrt{d}$ (see \cite%
{vershynin2018high}), and $\pi \left( C_{i}\right) $ approximates $\chi
\left( C_{i}\cap \mathcal{S}\right) $, where $\chi $ is the uniform
distribution on the $d-1$-dimensional submanifold $\mathcal{S}$. The minima
corresponding to $C_{i}$ are then "wide" or "narrow" in the sense of this
uniform distribution. Notice that by the definition of the Gibbs posterior $%
\mathcal{S}$ is also an approximation of the effective hypothesis space.

While this is specific to the Gibbs algorithm, it still corroborates to some
extent the empirically supported belief that "wide" or "flat" minima in
non-convex loss landscapes are good for generalization (\cite%
{hochreiter1997flat}, \cite{keskar2016large}, \cite{zhang2018theory}, \cite%
{dziugaite2020search}, \cite{iyer2023wide}).

It has been argued by several authors (\cite{dinh2017sharp}, \cite%
{granziol2020flatness}) that reparametrizations of wide minima may become
very narrow, but still compute the same function, so good generalization
cannot truly be a property of wide minima. If the hypotheses are viewed in
isolation, this seems a valid argument, and several authors thought of
reparametrization-invariant definitions of "width" (see \cite%
{andriushchenko2023modern} and references therein). From the perspective of
the Gibbs algorithm, however, any global reparametrization $T:\mathbb{R}%
^{d}\rightarrow \mathbb{R}^{d}$ must be accompanied by a corresponding
push-forward of the prior $\pi \mapsto T_{\#}\left( \pi \right) $ where $%
T_{\#}\left( \mu \right) \left( A\right) =\pi \left( T^{-1}\left( A\right)
\right) $, and in terms of the new (possibly very singular looking) prior
the neighborhoods of all minima are just as wide or narrow as before the
reparametrization.

\subsection{High temperatures}

If the loss is bounded, say $\ell \leq 1$, then setting $r=1$ instead of the
infimum in (\ref{Define Lambda}) causes the second term to vanish, so $%
\Lambda \left( h,\mathbf{x}\right) \leq \beta $. Thus Corollary \ref%
{Corollary kl} guarantees the following often considerably weaker bound.

\begin{corollary}
	\label{Corollary high temperature}Assume $\ell \leq 1$ and let $n\geq 8$ and 
	$\delta >0$. Then with probability at least $1-\delta $ in $\mathbf{x}\sim
	\mu ^{n}$ and $h\sim \hat{G}_{\beta }\left( \mathbf{x}\right) $%
	\begin{equation*}
		\kappa \left( \hat{L}\left( h,\mathbf{x}\right) ,L\left( h\right) \right)
		\leq \frac{1}{n}\left( \beta +\ln \frac{2\sqrt{n}}{\delta }\right) .
	\end{equation*}
\end{corollary}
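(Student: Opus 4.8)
The plan is to derive Corollary~\ref{Corollary high temperature} directly from Corollary~\ref{Corollary kl} by exhibiting a single admissible choice of $r$ in the definition~(\ref{Define Lambda}) of $\Lambda_{\beta}$ that uniformly bounds it by $\beta$. The key observation is that the cumulative distribution function $\hat{\varphi}(\cdot,\mathbf{x})$ is the $\pi$-measure of a sublevel set of the empirical loss, and once the threshold reaches the top of the range of $\ell$ this sublevel set is all of $\mathcal{H}$.

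Concretely, I would argue as follows. Since $\ell$ takes values in $[0,1]$, for every $h$ and every $\mathbf{x}$ we have $\hat{L}(h,\mathbf{x}) = (1/n)\sum_i \ell(h,x_i) \leq 1$, hence $\hat{L}(g,\mathbf{x}) \leq 1 \leq \hat{L}(h,\mathbf{x}) + 1$ for \emph{all} $g \in \mathcal{H}$. Therefore the set $\{g : \hat{L}(g,\mathbf{x}) \leq \hat{L}(h,\mathbf{x}) + 1\}$ equals $\mathcal{H}$, so $\hat{\varphi}(\hat{L}(h,\mathbf{x}) + 1,\mathbf{x}) = \pi(\mathcal{H}) = 1$ and $\ln(1/\hat{\varphi}(\hat{L}(h,\mathbf{x})+1,\mathbf{x})) = 0$. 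Taking $r = 1$ in the infimum defining $\Lambda_{\beta}(h,\mathbf{x})$ then gives
\begin{equation*}
\Lambda_{\beta}(h,\mathbf{x}) = \inf_{r\in\mathbb{R}} \beta r + \ln\frac{1}{\hat{\varphi}(\hat{L}(h,\mathbf{x})+r,\mathbf{x})} \leq \beta\cdot 1 + 0 = \beta .
\end{equation*}
Substituting $\Lambda_{\beta}(h,\mathbf{x}) \leq \beta$ into the inequality of Corollary~\ref{Corollary kl}, which holds with probability at least $1-\delta$ in $\mathbf{x}\sim\mu^n$ and $h\sim\hat{G}_{\beta}(\mathbf{x})$ for $n\geq 8$, yields $\kappa(\hat{L}(h,\mathbf{x}),L(h)) \leq \frac{1}{n}(\beta + \ln(2\sqrt{n}/\delta))$ on the same event, which is the claim.

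There is essentially no obstacle here: the entire content is the elementary remark that a threshold of $1$ captures the whole hypothesis space when the loss is bounded by $1$, making the $\ln(1/\hat{\varphi})$ term vanish. The only point to be slightly careful about is that this is a one-sided substitution into an upper bound — replacing $\Lambda_{\beta}(h,\mathbf{x})$ by the larger quantity $\beta$ only weakens the conclusion, so the high-probability statement is preserved verbatim, which is exactly why the paper flags it as ``often considerably weaker.'' No change to the confidence $\delta$, the sample-size condition $n\geq 8$, or the measurability setup is needed.
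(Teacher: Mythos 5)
Your proof is correct and follows exactly the route the paper uses: set $r=1$ in the definition of $\Lambda_\beta$, note that $\ell\leq 1$ makes $\hat{\varphi}(\hat{L}(h,\mathbf{x})+1,\mathbf{x})=1$ so the log term vanishes, conclude $\Lambda_\beta(h,\mathbf{x})\leq\beta$, and substitute into Corollary~\ref{Corollary kl}. No discrepancy with the paper's argument.
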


This is a data-independent worst-case bound. In the high temperature regime $%
\beta \ll n$ this bound is non-vacuous and comparable to the existing bounds
in \cite{lever2013tighter}, \cite{raginsky2017non}, \cite{dziugaite2018data}%
, \cite{kuzborskij2019distribution}, \cite{rivasplata2020pac} or \cite%
{maurer2024generalization}.

\subsection{Low temperatures\label{Section low temperatures}}

In the low temperature regime, when we cannot fall back on the high
temperature bound of Corollary \ref{Corollary high temperature},
generalization may succeed or fail in a data-dependent way, depending
crucially on the cumulative distribution function of the empirical loss $%
\hat{\varphi}\left( r,\mathbf{x}\right) =\pi \left\{ g:\hat{L}\left( g,%
\mathbf{x}\right) <r\right\} $.

To get an idea of the orders of magnitude involved, consider a data-set like
MNIST, where we can obtain a test error as small as $10^{-2}$ from $10^{4}$
training examples. Let $\beta \approx 10^{5}$, so we are in the
low-temperature regime. According to Theorem \textbf{\ref{Theorem Main}} the
optimal $r^{\ast }$ must then be about $r^{\ast }\approx 10^{-3}$, but for
any $\pi $ this means also that $\ln \left( 1/\hat{\varphi}\left( 10^{-3},%
\mathbf{x}\right) \right) /10^{4}\approx 10^{-2}$, equivalent to $\pi
\left\{ g:\hat{L}\left( g,\mathbf{x}\right) <10^{-3}\right\} \approx
e^{-100}\approx 4\times 10^{-44}$. On the one hand this shows that good
generalization does not make excessive demands on the prior mass of good
hypotheses, which is comforting because it makes the bound seem realistic.
On the other hand it also shows, that it is practically impossible to
estimate $\pi \left\{ g:\hat{L}\left( g,\mathbf{x}\right) <r\right\} $ by
simple trials of $\pi $. This is a drawback of the proposed bound: despite
the fact that it is completely data-dependent, it seems incomputable.

Corollary \ref{Corollary high temperature} would require $\beta \approx
10^{2}$ for a generalization gap of $10^{-2}$, implying a very large
training error as documented in Figure 1, Section 6 of \cite%
{dziugaite2018data}. The high temperature bound cannot explain this figure,
while Theorem \ref{Theorem Main} seems to be consistent with it.

If the prior volume of the set of global empirical minimizers is positive,
then, since almost surely $\hat{\varphi}\left( L\left( h,\mathbf{x}\right) ,%
\mathbf{x}\right) =\pi \left\{ g:\hat{L}\left( g,\mathbf{x}\right) \leq
L\left( h,\mathbf{x}\right) \right\} \geq \pi \left( \widehat{\mathcal{H}}%
_{\min }\left( \mathbf{x}\right) \right) $, we may set $r=0$ in (\ref{Define
	Lambda}), which yields the following.

\begin{corollary}
	\label{Corollary general} If $\pi \left( \widehat{\mathcal{H}}_{\min }\left( 
	\mathbf{x}\right) \right) >0$, then for any $\beta >0$ we have%
	\begin{equation*}
		\Lambda _{\beta }\left( h,\mathbf{x}\right) \leq \ln \left( 1/\pi \left( 
		\widehat{\mathcal{H}}_{\min }\left( \mathbf{x}\right) \right) \right) ,
	\end{equation*}%
	and for $n>8$ and $\delta >0$ with probability at least $1-\delta $ in $%
	\mathbf{x}\sim \mu ^{n}$ and $h\sim \hat{G}_{\beta }\left( \mathbf{x}\right) 
	$%
	\begin{equation*}
		\kappa \left( \hat{L}\left( h,\mathbf{x}\right) ,L\left( h\right) \right)
		\leq \frac{1}{n}\left( \ln \frac{1}{\pi \left( \widehat{\mathcal{H}}_{\min
			}\left( \mathbf{x}\right) \right) }+\ln \frac{2\sqrt{n}}{\delta }\right) .
	\end{equation*}
\end{corollary}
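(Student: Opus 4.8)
The plan is to evaluate the infimum defining $\Lambda_\beta$ at the single point $r=0$ and then feed the resulting bound into Corollary \ref{Corollary kl}.

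First I would check that a hypothesis $h$ drawn from $\hat{G}_\beta(\mathbf{x})$ almost surely satisfies $\hat{L}(h,\mathbf{x}) \ge \hat{L}_{\min}(\mathbf{x})$. This holds because $\hat{G}_\beta(\mathbf{x})$ is absolutely continuous with respect to $\pi$ — its $\pi$-density is $e^{-\beta \hat{L}(\cdot,\mathbf{x})}/Z_\beta(\mathbf{x})$, which is finite and positive — while by the very definition of the $\pi$-essential infimum the set $\{g : \hat{L}(g,\mathbf{x}) < \hat{L}_{\min}(\mathbf{x})\}$ is $\pi$-null, hence also $\hat{G}_\beta(\mathbf{x})$-null. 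Consequently, on a set of $h$ of $\hat{G}_\beta(\mathbf{x})$-probability one, every $g \in \widehat{\mathcal{H}}_{\min}(\mathbf{x})$ obeys $\hat{L}(g,\mathbf{x}) = \hat{L}_{\min}(\mathbf{x}) \le \hat{L}(h,\mathbf{x})$, so $\widehat{\mathcal{H}}_{\min}(\mathbf{x}) \subseteq \{g : \hat{L}(g,\mathbf{x}) \le \hat{L}(h,\mathbf{x})\}$ and monotonicity of $\pi$ gives $\hat{\varphi}(\hat{L}(h,\mathbf{x}),\mathbf{x}) \ge \pi(\widehat{\mathcal{H}}_{\min}(\mathbf{x})) > 0$.

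Next, inserting $r=0$ into the definition (\ref{Define Lambda}) of $\Lambda_\beta$, the $\beta r$ term vanishes and we obtain $\Lambda_\beta(h,\mathbf{x}) \le \ln(1/\hat{\varphi}(\hat{L}(h,\mathbf{x}),\mathbf{x})) \le \ln(1/\pi(\widehat{\mathcal{H}}_{\min}(\mathbf{x})))$, valid for every $\beta > 0$ and almost every $h$; this is the first assertion of the corollary. For the second assertion I would substitute this bound on $\Lambda_\beta(h,\mathbf{x})$ into the inequality of Corollary \ref{Corollary kl}: intersecting the probability-$(1-\delta)$ event supplied by that corollary with the probability-one event on which the $\Lambda_\beta$ bound holds still leaves probability at least $1-\delta$, and on that intersection the claimed inequality follows.

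The one point requiring care — and essentially the only place where anything could slip — is the almost-sure step: the bound $\Lambda_\beta(h,\mathbf{x}) \le \ln(1/\pi(\widehat{\mathcal{H}}_{\min}(\mathbf{x})))$ is not a pointwise identity over all $h \in \mathcal{H}$ but holds off a $\hat{G}_\beta(\mathbf{x})$-null set; since the statement is only a high-probability claim over the random draw, discarding that null set costs nothing. Apart from this, the argument is a one-line specialization of Corollary \ref{Corollary kl} and presents no real obstacle.
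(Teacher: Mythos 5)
Your argument is correct and follows essentially the same route as the paper: set $r=0$ in the definition of $\Lambda_\beta$, use $\hat{\varphi}(\hat{L}(h,\mathbf{x}),\mathbf{x}) \ge \pi(\widehat{\mathcal{H}}_{\min}(\mathbf{x}))$ almost surely, and plug the resulting bound into Corollary \ref{Corollary kl}. The only difference is that you spell out why the almost-sure step holds (absolute continuity of $\hat{G}_\beta(\mathbf{x})$ with respect to $\pi$, so the $\pi$-null set where $\hat{L}(h,\mathbf{x}) < \hat{L}_{\min}(\mathbf{x})$ is also $\hat{G}_\beta(\mathbf{x})$-null), a detail the paper leaves implicit.
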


Proposition \ref{Proposition Margins} (i) in the next section shows, that in
the case of binary classification a hard margin $m_{0,\mathbf{x}}^{\ast }>0$
and appropriate alignment of $\pi $ guarantee $\pi \left( \widehat{\mathcal{H%
}}_{\min }\left( \mathbf{x}\right) \right) =\varphi \left( 0,\mathbf{x}%
\right) >0$, so this corollary applies.

\subsection{Margins in binary classification}

When $\beta >n$ then generalization of the Gibbs algorithm works for "good"
data and fails for "bad" data, for which Theorem \ref{Theorem Main} provides
a rough definition: if there are more hypotheses with small empirical error
in data-set $\mathbf{x}$ than in data-set $\mathbf{x}^{\prime }$, then $%
\mathbf{x}$ is better than $\mathbf{x}^{\prime }$. This criterium is simple.
For a given hypothesis space and a given prior it depends only on the
data-set and it guarantees better generalization for better data. But it is
not very intuitive. To provide more intuition we relate the cumulative
distribution function $\hat{\varphi}$ in Theorem \ref{Theorem Main} to the
concept of \textit{margin} in binary classification, which is more in line
with classical approaches to machine learning (\cite{Anthony99}, \cite%
{cristianini2000introduction}).

Suppose that the data consist of labeled inputs, $\mathcal{X}=\mathcal{Z}%
\times \left\{ -1,1\right\} $ with $x=\left( z,y\right) $ and $\mathcal{H}%
\subseteq \mathbb{R}^{d}$. There is a fixed function $\Phi :\mathcal{H}%
\times \mathcal{Z}\rightarrow \mathbb{R}$, such that $\Phi \left( .,z\right) 
$ is continuous on $\mathcal{H}$ for every $z\in \mathcal{Z}$. We also
assume that $\mathcal{H}$ has the following bias- or translation property:
for every $h\in \mathcal{H}$ and $s\in \mathbb{R}$ there is some hypothesis $%
h_{s}\in \mathcal{H}$ such that $\Phi \left( h_{s},z\right) =\Phi \left(
h,z\right) +s$ for every $z\in \mathcal{Z}$. We consider both the 0-1 loss 
\begin{equation*}
	\ell _{\text{01}}\left( h,x\right) =\ell _{\text{01}}\left( h,\left(
	z,y\right) \right) =\left\{ 
	\begin{array}{cc}
		0 & \text{if }\Phi \left( h,z\right) y>0 \\ 
		1 & \text{otherwise}%
	\end{array}%
	\right. .
\end{equation*}%
The case $\mathcal{H=S}^{d-1}\times \mathbb{R}$, $\Phi \left( \left(
u,b\right) ,z\right) =\left\langle u,z\right\rangle -b$ corresponds to
linear classification, where the unit vector $u$ defines the orientation of
a hyperplane, and $b$ its translation. In another important case $\mathcal{H}%
=\mathbb{R}^{d}$, $\Phi \left( w,z\right) $ is the output of a neural
network with input $z$ and parameter vector $w$, where $w$ also includes a
bias after the last layer.

For given $r\geq 0$ and data-set $\mathbf{x}\in \mathcal{X}^{n}$ define the
margin function $m_{r,\mathbf{x}}:\mathcal{H}\rightarrow \mathbb{R}$ by 
\begin{equation*}
	m_{r,\mathbf{x}}\left( h\right) =\max_{I\subseteq \left[ n\right]
		:\left\vert I\right\vert \geq \left( 1-r\right) n}\min_{i\in I}\Phi \left(
	h,z_{i}\right) y_{i}.
\end{equation*}%
Then $m_{0,\mathbf{x}}$ is the usual hard margin, $m_{r,\mathbf{x}}$ is a
soft margin, allowing an error fraction $r$. Let $m_{r,\mathbf{x}}^{\ast
}=\sup_{h\in \mathcal{H}}m_{r,\mathbf{x}}\left( h\right) $.

\begin{proposition}
	\label{Proposition Margins}Let $\mathbf{x}^{\prime }\in \mathcal{X}^{n}$, $%
	r\geq 0$.
	
	(i) If $m_{r,\mathbf{x}}^{\ast }>0$ then $\left\{ g:\hat{L}\left( g,\mathbf{x%
	}\right) \leq r\right\} $ contains a nonempty open subset $O$ of $\mathcal{H}
	$, and if $\pi $ has a nonzero density w.r.t. Lebesgue measure, then $\hat{%
		\varphi}\left( r,\mathbf{x}\right) >0$.
	
	(ii) If $\mathbf{x}^{\prime }\in \mathcal{X}^{n}$ and $m_{r\mathbf{x}%
		^{\prime }}\left( h\right) \leq m_{r\mathbf{x}}\left( h\right) $ for all $%
	h\in \mathcal{H}$ with $\hat{L}\left( h,\mathbf{x}^{\prime }\right) \leq r$,
	then $\hat{\varphi}\left( r,\mathbf{x}^{\prime }\right) \leq \hat{\varphi}%
	\left( r,\mathbf{x}\right) $.
\end{proposition}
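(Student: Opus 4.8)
The plan is to prove both parts directly from the definitions, using the continuity of $\Phi(\cdot,z)$ and the translation property of $\mathcal{H}$.

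For part (i): start with a hypothesis $h_0$ with $m_{r,\mathbf{x}}(h_0) > 0$, so there is an index set $I \subseteq [n]$ with $|I| \geq (1-r)n$ and $\min_{i \in I} \Phi(h_0,z_i) y_i =: 2\eta > 0$. The point of the translation property is that we may first shift $h_0$ to make its margin on $I$ strictly positive with room to spare (it already is), but more importantly the translation lets us argue that an entire \emph{open} neighborhood of some hypothesis lies in $\{g : \hat L(g,\mathbf{x}) \leq r\}$. Concretely, by continuity of $h \mapsto \Phi(h,z_i) y_i$ at $h_0$ for each of the finitely many $i \in I$, there is an open neighborhood $O$ of $h_0$ on which $\Phi(h,z_i) y_i > 0$ for all $i \in I$ simultaneously. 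For $h \in O$ the $0$-$1$ loss is zero on all of $I$, hence $\hat L(h,\mathbf{x}) \leq (n - |I|)/n \leq r$. So $O \subseteq \{g : \hat L(g,\mathbf{x}) \leq r\}$ and $O$ is nonempty open. If $\pi$ has a density that is not a.e.\ zero with respect to Lebesgue measure, then $\pi$ assigns positive mass to every nonempty open set on which the density is positive somewhere; one then uses that the density being nonzero (as a function, not merely not a.e.\ zero) means there is a point where it is positive, and — shrinking $O$ or translating within the open region — arrange that $O$ meets the positivity set. Strictly, "nonzero density" should be read so that $\pi(O) > 0$ follows; I would state it as: since $\pi \ll \text{Leb}$ with density not identically zero, $\pi$ of every nonempty open set containing a Lebesgue-density point is positive, and the translation property guarantees $O$ can be taken to contain such a point. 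Then $\hat\varphi(r,\mathbf{x}) = \pi\{g : \hat L(g,\mathbf{x}) \leq r\} \geq \pi(O) > 0$.

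For part (ii): this is a pure set-inclusion argument plus monotonicity of $\pi$. Let $h$ satisfy $\hat L(h,\mathbf{x}') \leq r$. Unwinding the definition, $\hat L(h,\mathbf{x}') \leq r$ is equivalent to: there is $I$ with $|I| \geq (1-r)n$ such that $\Phi(h,z_i')y_i' > 0$ for all $i \in I$, which in turn says $m_{r,\mathbf{x}'}(h) > 0$ — actually one must be a little careful, since $\hat L(h,\mathbf{x}') \le r$ gives $m_{r,\mathbf{x}'}(h) \geq 0$ but the max over $I$ of the min could be exactly $0$; I would handle the boundary by noting $\hat L(h,\mathbf{x}') \le r$ iff $m_{r,\mathbf{x}'}(h) \geq 0$ when margins are allowed to vanish, or work with the clean equivalence $\hat L(h,\mathbf{x}') \le r \iff m_{r,\mathbf{x}'}(h) > 0$ up to measure-zero issues. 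The hypothesis of (ii) then gives $m_{r,\mathbf{x}}(h) \geq m_{r,\mathbf{x}'}(h) \geq 0$ (or $>0$), which by the same equivalence yields $\hat L(h,\mathbf{x}) \leq r$. Hence $\{g : \hat L(g,\mathbf{x}') \leq r\} \subseteq \{g : \hat L(g,\mathbf{x}) \leq r\}$, and applying $\pi$ gives $\hat\varphi(r,\mathbf{x}') \leq \hat\varphi(r,\mathbf{x})$.

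The main obstacle is the precise bookkeeping around the boundary case $m = 0$ and the exact meaning of "nonzero density." The inequality $\hat L(h,\mathbf{x}) \leq r$ corresponds to the margin being $\geq 0$ (weak inequality), whereas strict positivity of the margin is what gives an \emph{open} set in part (i); I would state a short lemma pinning down the equivalence $\{h : m_{r,\mathbf{x}}(h) > 0\} \subseteq \{h : \hat L(h,\mathbf{x}) \leq r\} \subseteq \{h : m_{r,\mathbf{x}}(h) \geq 0\}$ and the fact that the leftmost set is open, then thread this through both parts. Everything else is routine: finite intersections of open sets, monotonicity of measures, and the translation property used only to ensure the open set $O$ in (i) can be placed where the prior density is positive.
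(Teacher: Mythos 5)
Your proposal is essentially correct, and your argument for part (i) is in fact a small simplification of the paper's; let me flag the differences and one point you should tighten.

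For (i), the paper first proves the identity $\left\{h : \hat{L}\left(h,\mathbf{x}\right) \leq r\right\} = m_{r,\mathbf{x}}^{-1}\left(\left(0, m_{r,\mathbf{x}}^{\ast}\right]\right)$, takes $O = m_{r,\mathbf{x}}^{-1}\left(\left(0, m_{r,\mathbf{x}}^{\ast}\right)\right)$ (open by continuity of the finite max-min), and then invokes the translation property to produce some $h_s$ with $0 < m_{r,\mathbf{x}}\left(h_s\right) < m_{r,\mathbf{x}}^{\ast}$, establishing $O \neq \emptyset$. Your route bypasses all of this: take any $h_0$ with $m_{r,\mathbf{x}}\left(h_0\right) > 0$, note that the finitely many continuous maps $h \mapsto \Phi\left(h,z_i\right)y_i$, $i \in I$, are all strictly positive at $h_0$, and the finite intersection of preimages of $\left(0,\infty\right)$ is a nonempty open neighborhood of $h_0$ inside $\left\{g : \hat{L}\left(g,\mathbf{x}\right) \leq r\right\}$. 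You do not use the translation property at all, which is a genuine simplification (the paper needs translation only to show nonemptiness of its particular $O$, and that step involves a strictness at the supremum that is not fully spelled out). On the density point you are overcomplicating: "nonzero density" is to be read as an a.e. strictly positive density, in which case $\pi\left(O\right) > 0$ for every nonempty open $O$ is immediate; your maneuver of shrinking or translating $O$ to meet the positivity set is not needed, and would not in fact rescue the conclusion if the density vanished on an open region, so simply adopt the strong reading.

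For (ii), your argument is the same as the paper's, but the boundary case you worry about does not exist. The equivalence $\hat{L}\left(h,\mathbf{x}\right) \leq r \iff m_{r,\mathbf{x}}\left(h\right) > 0$ is exact. The reason is the strict inequality in the $0$-$1$ loss: $\ell_{01}\left(h,\left(z,y\right)\right) = 0$ precisely when $\Phi\left(h,z\right)y > 0$. Hence $\hat{L}\left(h,\mathbf{x}\right) \leq r$ means at least $\left(1-r\right)n$ indices satisfy $\Phi\left(h,z_i\right)y_i > 0$ strictly, the finite minimum over any such $I$ with $\left\vert I\right\vert \geq \left(1-r\right)n$ is strictly positive, and therefore $m_{r,\mathbf{x}}\left(h\right) > 0$, not merely $\geq 0$. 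Your three-set sandwich $\left\{m_{r,\mathbf{x}} > 0\right\} \subseteq \left\{\hat{L} \leq r\right\} \subseteq \left\{m_{r,\mathbf{x}} \geq 0\right\}$ is true but weaker than what actually holds: the left inclusion is an equality, and there is no measure-zero slippage. With that equality pinned down, (ii) is exactly the chain you write, $\hat{L}\left(h,\mathbf{x}'\right) \leq r \Rightarrow m_{r,\mathbf{x}'}\left(h\right) > 0 \Rightarrow m_{r,\mathbf{x}}\left(h\right) > 0 \Rightarrow \hat{L}\left(h,\mathbf{x}\right) \leq r$, followed by monotonicity of $\pi$.
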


Part (i) and Corollary \ref{Corollary general} show that the existence of a
hard margin $m_{0,\mathbf{x}}^{\ast }>0$ implies generalization of the Gibbs
algorithm for all values of $\beta $, whenever the prior has a positive
density. The same is easily shown for the hinge loss $\ell _{\gamma }\left(
h,\left( z,y\right) \right) :=\max \left\{ 0,1-\gamma ^{-1}\Phi \left(
h,z\right) y\right\} $ whenever $m_{0,\mathbf{x}}^{\ast }>\gamma $. Part
(ii) roughly shows that Theorem \ref{Theorem Main} gives better bounds for
uniformly better margins. The uniform monotonicity condition in part (ii) is
quite strong, and in the non-linear case the existence of such an $\mathbf{x}%
^{\prime }$ does not seem guaranteed. In linear classification, however, if $%
m_{r,\mathbf{x}}^{\ast }>0$, there always exists $\mathbf{x}^{\prime }$ such
that $m_{r\mathbf{x}^{\prime }}\left( h\right) \leq m_{r\mathbf{x}}\left(
h\right) $ for all $h\in \mathcal{H}$, obtained by moving the support
vectors towards the maximal margin hyperplane.

\begin{proof}
	$\hat{L}\left( h,\mathbf{x}\right) \leq r$ $\iff $ $h$ makes at most $rn$
	errors $\iff $ there is a set $I\subseteq \left[ n\right] $ such that $%
	\left\vert I\right\vert \geq \left( 1-r\right) n$ and $\min_{i\in I}\Phi
	\left( h,z_{i}\right) y_{i}>0$ $\iff $ if $m_{r,\mathbf{x}}\left( h\right)
	>0 $ $\iff $ $h\in m_{r,\mathbf{x}}^{-1}\left( \left( 0,\infty \right)
	\right) =m_{r,\mathbf{x}}^{-1}\left( \left( 0,m_{r,\mathbf{x}}^{\ast }\right]
	\right) $, where the last identity follows from maximality. We have shown
	that%
	\begin{equation}
		\left\{ h:\hat{L}\left( h,\mathbf{x}\right) \leq r\right\} =m_{r,\mathbf{x}%
		}^{-1}\left( \left( 0,m_{r,\mathbf{x}}^{\ast }\right] \right) .
		\label{Identity}
	\end{equation}
	
	(i) The function $m_{r,\mathbf{x}}$, being the result of a finite number of $%
	\max $ or $\min $ operations, is continuous on $\mathcal{H}$. It follows
	that $O=m_{r,\mathbf{x}}^{-1}\left( \left( 0,m_{r,\mathbf{x}}^{\ast }\right)
	\right) \subseteq \left\{ h:\hat{L}\left( h,\mathbf{x}\right) \leq r\right\} 
	$ is an open subset of $\mathcal{H}$. Let $\epsilon \in \left( 0,m_{r,%
		\mathbf{x}}^{\ast }\right) $, so there exists $h\in \mathcal{H}$ such that $%
	m_{r,\mathbf{x}}\left( h\right) >m_{r,\mathbf{x}}^{\ast }-\epsilon $. Choose 
	$s\in \left( -m_{r,\mathbf{x}}\left( h\right) ,0\right) \cup \left( 0,m_{r,%
		\mathbf{x}}\left( h\right) \right) $. I claim that $m_{r,\mathbf{x}}\left(
	h_{s}\right) >0$. Let $I$ be the maximizer in the definition of $m_{r,%
		\mathbf{x}}\left( h\right) $ and $i\in I$. Then $\Phi \left(
	h_{s},z_{i}\right) y_{i}=\Phi \left( h,z_{i}\right) y_{i}+sy_{i}\geq m_{r,%
		\mathbf{x}}\left( h\right) +sy_{i}>0$. By maximality $h_{s}\in m_{r,\mathbf{x%
	}}^{-1}\left( \left( 0,m_{r,\mathbf{x}}^{\ast }\right) \right) =O$, which is
	therefore also nonempty. The second assertion follows immediately from the
	first.
	
	(ii) Take $h\in \left\{ g:\hat{L}\left( g,\mathbf{x}^{\prime }\right) \leq
	r\right\} $. Then $0\leq m_{r\mathbf{x}^{\prime }}\left( h\right) \leq m_{r%
		\mathbf{x}}\left( h\right) \leq m_{r\mathbf{x}}^{\ast }$, so $h\in m_{r,%
		\mathbf{x}}^{-1}\left( \left( 0,m_{r,\mathbf{x}}^{\ast }\right] \right)
	=\left\{ g:\hat{L}\left( h,\mathbf{x}\right) \leq r\right\} $ by (\ref%
	{Identity}).
\end{proof}

\subsection{The zero-temperature limit}

The next Proposition shows, that the upper bound on $\Lambda _{\beta }\left(
h,\mathbf{x}\right) $ in Corollary \ref{Corollary general}, which does not
depend on $\beta $, is in fact the limit as $\beta \rightarrow \infty $.

\begin{proposition}
	\label{Proposition limit}Fix $\mathbf{x}\in \mathcal{X}^{n}$. If $\pi \left( 
	\widehat{\mathcal{H}}_{\min }\left( \mathbf{x}\right) \right) >0$ then $%
	\Lambda _{\beta }\left( h,\mathbf{x}\right) \rightarrow \ln \left( 1/\pi
	\left( \widehat{\mathcal{H}}_{\min }\left( \mathbf{x}\right) \right) \right) 
	$ in probability as $\beta \rightarrow \infty $.
\end{proposition}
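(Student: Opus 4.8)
The plan is to observe that $\Lambda _{\beta }\left( h,\mathbf{x}\right) $ becomes \emph{deterministic} on the event $\left\{ h\in \widehat{\mathcal{H}}_{\min }\left( \mathbf{x}\right) \right\} $, and that the Gibbs posterior puts asymptotically all its mass on this event as $\beta \rightarrow \infty $. Write $p_{0}:=\pi \left( \widehat{\mathcal{H}}_{\min }\left( \mathbf{x}\right) \right) >0$ and $\hat{L}_{0}:=\hat{L}_{\min }\left( \mathbf{x}\right) $. Substituting $s=\hat{L}\left( h,\mathbf{x}\right) +r$ in (\ref{Define Lambda}) and using that $\hat{\varphi}\left( s,\mathbf{x}\right) =0$ for $s<\hat{L}_{0}$ (definition of the essential infimum), I would first record the decomposition
\begin{equation*}
	\Lambda _{\beta }\left( h,\mathbf{x}\right) =-\beta \bigl(\hat{L}\left( h,\mathbf{x}\right) -\hat{L}_{0}\bigr)+\psi \left( \beta \right) ,\qquad \psi \left( \beta \right) :=\inf_{t\geq 0}\Bigl[\beta t+\ln \tfrac{1}{\hat{\varphi}\left( \hat{L}_{0}+t,\mathbf{x}\right) }\Bigr],
\end{equation*}
so that $\Lambda _{\beta }\left( h,\mathbf{x}\right) =\psi \left( \beta \right) $ whenever $h\in \widehat{\mathcal{H}}_{\min }\left( \mathbf{x}\right) $, since then $\hat{L}\left( h,\mathbf{x}\right) =\hat{L}_{0}$.

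The second step is to show $\psi \left( \beta \right) \rightarrow \ln \left( 1/p_{0}\right) $ as a deterministic limit. The bound $\psi \left( \beta \right) \leq \ln \left( 1/p_{0}\right) $ is just the choice $t=0$ (as already used in Corollary \ref{Corollary general}, noting $\hat{\varphi}\left( \hat{L}_{0},\mathbf{x}\right) =p_{0}$). For the reverse inequality, fix $\epsilon >0$; since $r\mapsto \hat{\varphi}\left( r,\mathbf{x}\right) $ is non-decreasing and right-continuous with value $p_{0}$ at $\hat{L}_{0}$, there is $t_{0}>0$ with $\hat{\varphi}\left( \hat{L}_{0}+t,\mathbf{x}\right) \leq p_{0}e^{\epsilon }$ on $\left[ 0,t_{0}\right] $. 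For $t\in \left[ 0,t_{0}\right] $ the logarithmic term is then at least $\ln \left( 1/p_{0}\right) -\epsilon $ and $\beta t\geq 0$; for $t>t_{0}$ the whole bracket is at least $\beta t_{0}$. Hence for $\beta \geq \ln \left( 1/p_{0}\right) /t_{0}$ we get $\psi \left( \beta \right) \geq \ln \left( 1/p_{0}\right) -\epsilon $.

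The third step is concentration of the posterior: writing $\hat{G}_{\beta }\left( \mathbf{x}\right) \left( \widehat{\mathcal{H}}_{\min }\left( \mathbf{x}\right) \right) =p_{0}\big/\int_{\mathcal{H}}e^{-\beta \left( \hat{L}\left( h,\mathbf{x}\right) -\hat{L}_{0}\right) }\,d\pi \left( h\right) $ and applying dominated convergence --- the integrand is bounded by $1$ and tends $\pi $-a.s. to $1_{\widehat{\mathcal{H}}_{\min }\left( \mathbf{x}\right) }$, because $\hat{L}\left( \cdot ,\mathbf{x}\right) \geq \hat{L}_{0}$ holds $\pi $-a.s. --- the denominator converges to $p_{0}$, so $\hat{G}_{\beta }\left( \mathbf{x}\right) \left( \widehat{\mathcal{H}}_{\min }\left( \mathbf{x}\right) \right) \rightarrow 1$.

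Putting the pieces together, for every $\epsilon >0$
\begin{equation*}
	\hat{G}_{\beta }\left( \mathbf{x}\right) \bigl\{h:\left\vert \Lambda _{\beta }\left( h,\mathbf{x}\right) -\ln \left( 1/p_{0}\right) \right\vert >\epsilon \bigr\}\leq \hat{G}_{\beta }\left( \mathbf{x}\right) \bigl(\mathcal{H}\setminus \widehat{\mathcal{H}}_{\min }\left( \mathbf{x}\right) \bigr)+1_{\left\{ \left\vert \psi \left( \beta \right) -\ln \left( 1/p_{0}\right) \right\vert >\epsilon \right\} },
\end{equation*}
and both terms vanish as $\beta \rightarrow \infty $, which is the assertion. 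I expect the only real subtlety to be the lower bound on $\psi \left( \beta \right) $ in the second step, where one must invoke right-continuity of the empirical-loss cumulative distribution $\hat{\varphi}$ at its left endpoint $\hat{L}_{0}$ to preclude a vanishing sequence of radii $t$ keeping $\ln \left( 1/\hat{\varphi}\left( \hat{L}_{0}+t,\mathbf{x}\right) \right) $ bounded away from $\ln \left( 1/p_{0}\right) $; the concentration step is a routine dominated-convergence argument and the final assembly is a union bound.
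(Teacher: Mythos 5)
Your proof is correct, and it takes a genuinely different (and, I think, cleaner) route than the paper's. The paper keeps $\Lambda _{\beta }\left( h,\mathbf{x}\right) $ in its original form, invokes Proposition~3.1 of \cite{athreya2010gibbs} to obtain posterior concentration in the weaker sense that $\hat{G}_{\beta }\left( \mathbf{x}\right) \left\{ h:\hat{L}\left( h,\mathbf{x}\right) \leq \hat{L}_{\min }\left( \mathbf{x}\right) +\epsilon \right\} \to 1$, and then chains inequalities on the data-dependent minimizer $r^{\ast }$ to force it below $\epsilon $ and apply right-continuity. Your change of variables $s=\hat{L}\left( h,\mathbf{x}\right) +r$, together with the fact that $\hat{\varphi}\left( s,\mathbf{x}\right) =0$ for $s<\hat{L}_{\min }\left( \mathbf{x}\right) $, yields the decomposition $\Lambda _{\beta }\left( h,\mathbf{x}\right) =-\beta \bigl( \hat{L}\left( h,\mathbf{x}\right) -\hat{L}_{\min }\left( \mathbf{x}\right) \bigr) +\psi \left( \beta \right) $, which isolates the $h$-dependence cleanly: on $\widehat{\mathcal{H}}_{\min }\left( \mathbf{x}\right) $ the complexity reduces to the deterministic quantity $\psi \left( \beta \right) $, whose limit you obtain by the same right-continuity argument the paper uses but without the detour through the random $r^{\ast }$, while the posterior concentration on $\widehat{\mathcal{H}}_{\min }\left( \mathbf{x}\right) $ follows from a one-line dominated-convergence computation. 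This last step works precisely because $\pi \left( \widehat{\mathcal{H}}_{\min }\left( \mathbf{x}\right) \right) >0$ is in force, so no external concentration lemma is needed; the trade-off is that your argument uses the exact-minimizer set rather than an $\epsilon $-sublevel set and so would not extend to the case $\pi \left( \widehat{\mathcal{H}}_{\min }\left( \mathbf{x}\right) \right) =0$, but that case is excluded by hypothesis. Overall you gain a self-contained proof that makes the structure of the limit --- a deterministic value attained on a set of asymptotically full posterior mass --- fully explicit.
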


\begin{proof}
	We already have $\Lambda _{\beta }\left( h,\mathbf{x}\right) \leq \ln \left(
	1/\pi \left( \widehat{\mathcal{H}}_{\min }\left( \mathbf{x}\right) \right)
	\right) $. For the other direction fix $\eta ,\delta >0$ and note, that by
	the right continuity of the distribution function there is $\epsilon $ such
	that for all $0<\epsilon ^{\prime }\leq \epsilon $%
	\begin{equation}
		\beta \epsilon ^{\prime }+\ln \frac{1}{\hat{\varphi}\left( \hat{L}_{\min
			}\left( \mathbf{x}\right) +2\epsilon ^{\prime },\mathbf{x}\right) }>\ln
		\left( 1/\pi \left( \widehat{\mathcal{H}}_{\min }\left( \mathbf{x}\right)
		\right) \right) -\eta .  \label{lower bound lambda}
	\end{equation}%
	By Proposition 3.1 in \cite{athreya2010gibbs} there exists $\beta _{0}$ such
	that $\beta >\beta _{0}$ implies that $\Pr_{h\sim \hat{G}_{\beta }\left( 
		\mathbf{x}\right) }\left\{ \hat{L}\left( h,\mathbf{x}\right) \leq \hat{L}%
	_{\min }\left( \mathbf{x}\right) +\epsilon \right\} \geq 1-\delta $. In this
	event 
	\begin{eqnarray*}
		\Lambda _{\beta }\left( h,\mathbf{x}\right) &=&\beta r^{\ast }-\ln \hat{%
			\varphi}\left( \hat{L}\left( h,\mathbf{x}\right) +r^{\ast },\mathbf{x}\right)
		\\
		&\geq &\beta r^{\ast }-\ln \hat{\varphi}\left( \hat{L}_{\min }\left( \mathbf{%
			x}\right) +\epsilon +r^{\ast },\mathbf{x}\right) \\
		&\geq &\beta r\left( \beta ,\epsilon \right) -\ln \hat{\varphi}\left( \hat{L}%
		_{\min }\left( \mathbf{x}\right) +\epsilon +r\left( \beta ,\epsilon \right) ,%
		\mathbf{x}\right) ,
	\end{eqnarray*}%
	where $r^{\ast }$ is the minimizer in the definition of $\Lambda $ and $%
	r\left( \beta ,\epsilon \right) $ is the minimizer of the new right hand
	side, which doesn't depend on $h$ but on $\beta $ and $\epsilon $. By
	Corollary \ref{Corollary general} 
	\begin{equation*}
		\ln \left( 1/\pi \left( \widehat{\mathcal{H}}_{\min }\left( \mathbf{x}%
		\right) \right) \right) \geq \beta r\left( \beta ,\epsilon \right) -\ln \hat{%
			\varphi}\left( \hat{L}_{\min }\left( \mathbf{x}\right) +\epsilon +r\left(
		\beta ,\epsilon \right) ,\mathbf{x}\right) .
	\end{equation*}%
	This implies $r\left( \beta ,\epsilon \right) \leq \ln \left( 1/\pi \left( 
	\widehat{\mathcal{H}}_{\min }\left( \mathbf{x}\right) \right) \right) /\beta 
	$, so by making $\beta $ large enough we can ensure that $r\left( \beta
	,\epsilon \right) <\epsilon $, so (\ref{lower bound lambda}) implies $%
	\Lambda \left( h,\mathbf{x}\right) \geq \ln \left( 1/\pi \left( \widehat{%
		\mathcal{H}}_{\min }\left( \mathbf{x}\right) \right) \right) -\eta $.
\end{proof}

\begin{figure}[h!]
	\centering
	% b = bottom, t=top, 
	\includegraphics[width=0.7\textwidth]{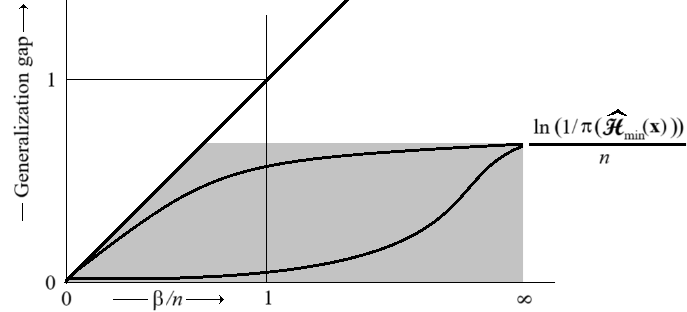}  
	\caption{Schematic and compactified phase diagram of the bounds when $\pi (\widehat{\mathcal{H}}_{\min }\left( \mathbf{x}\right))>0$ with $n$ fixed. The diagonal represents the data-independent bounds of Corollary \ref{Corollary high temperature}. The data-dependent bounds have to lie in the shaded region by Corollary \ref{Corollary general} and converge to $\ln \left( 1/\pi \left( 
		\widehat{\mathcal{H}}_{\min }\left( \mathbf{x}\right) \right) \right)/n$ by Proposition \ref{Proposition limit}, ignoring smaller logarithmic terms.}
	\label{Phase diagram}
\end{figure}

As an example let $\mathcal{H}$ be finite, with $\pi $ being the uniform
counting measure and consider the Gibbs-algorithm in the low temperature
limit $\beta \rightarrow \infty $, where the posterior becomes uniform on
the set of minimizers (see \cite{athreya2010gibbs}). If there is only a
single minimizer then $\pi \left( \widehat{\mathcal{H}}_{\min }\left( 
\mathbf{x}\right) \right) =1/\left\vert \mathcal{H}\right\vert $ and the
bound becomes one of roughly order $\ln \left( \left\vert \mathcal{H}%
\right\vert \right) /n$, which is just the usual bound, serving as a sanity
check. But if there are $K$ minimizers we get an additional term of $-\ln
K/n $, decreasing the generalization gap, another instance of cooperative
behavior. A similar phenomenon is described in \cite{langford2004computable}.

The consequences of Corollaries \ref{Corollary high temperature}, \ref%
{Corollary general} and Proposition \ref{Proposition limit} are summarized
in Figure \ref{Phase diagram}.

\subsection{Distribution-dependence and reproducibility}

With $\mathcal{H}$ and $\pi $ fixed generalization of the Gibbs algorithm
should be a property of the underlying data distribution $\mu $. We expect
that for new data drawn from $\mu ^{n}$ similar results should be obtained.
Since the bound in Theorem \ref{Theorem Main} depends essentially on the
cumulative distribution function of the empirical loss $\hat{\varphi}\left(
r,\mathbf{x}\right) =\pi \left\{ h:\hat{L}\left( h,\mathbf{x}\right)
>r\right\} $, this function should in some sense concentrate on its
distribution dependent counterpart $\varphi \left( r\right) =\pi \left\{
h:L\left( h\right) >r\right\} $. Such is the content of the following
proposition, which may be of independent interest.

\begin{proposition}
	\label{Proposition compare distribution functions}Let $\delta >0$ and $p\in 
	\mathbb{N}$. Set 
	\begin{equation*}
		s\left( n,\delta ,p\right) :=\sqrt{\frac{\ln \left( \left( 1+n^{2p+1}\right)
				/\delta \right) }{2n}}.
	\end{equation*}
	
	(i) With probability at least $1-\delta $ in $\mathbf{x}\sim \mu ^{n}$ we
	have for all $r\in \mathbb{R}$ that%
	\begin{equation*}
		\hat{\varphi}\left( r+s\left( n,\delta ,p\right) ,\mathbf{x}\right) \geq
		\varphi \left( r\right) -n^{-p}s\left( n,\delta ,p\right) .
	\end{equation*}
	
	(ii) With probability at least $1-\delta $ in $\mathbf{x}\sim \mu ^{n}$ we
	have for all $r\in \mathbb{R}$ that%
	\begin{equation*}
		\varphi \left( r+s\left( n,\delta ,p\right) \right) \geq \hat{\varphi}\left(
		r,\mathbf{x}\right) -n^{-p}s\left( n,\delta ,p\right) .
	\end{equation*}
\end{proposition}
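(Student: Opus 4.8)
The plan is to bound, in a single stroke and uniformly in $r\in\mathbb{R}$, the prior mass of hypotheses whose empirical and true losses differ by more than $s:=s\left(n,\delta,p\right)$. Since the form of $s$ is exactly what Hoeffding's inequality produces, I would work under the standing assumption $\ell\in\left[0,1\right]$ of the corollaries that invoke this proposition.

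For part (i), introduce the $\mathbf{x}$-measurable quantity $N\left(\mathbf{x}\right):=\pi\left\{g\in\mathcal{H}:\hat{L}\left(g,\mathbf{x}\right)-L\left(g\right)>s\right\}$. For each fixed $g$ the value $\hat{L}\left(g,\mathbf{x}\right)$ is an average of $n$ i.i.d.\ $\left[0,1\right]$-valued variables with mean $L\left(g\right)$, so Hoeffding gives $\Pr_{\mathbf{x}}\left\{\hat{L}\left(g,\mathbf{x}\right)-L\left(g\right)>s\right\}\leq e^{-2ns^{2}}$. Interchanging the two integrals (everything nonnegative and jointly measurable), $\mathbb{E}_{\mathbf{x}}\left[N\left(\mathbf{x}\right)\right]=\mathbb{E}_{g\sim\pi}\Pr_{\mathbf{x}}\left\{\hat{L}\left(g,\mathbf{x}\right)-L\left(g\right)>s\right\}\leq e^{-2ns^{2}}=\delta/\left(1+n^{2p+1}\right)$, the last equality by the definition of $s$. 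A short computation --- using $p\geq1$, so $n^{-p}\left(1+n^{2p+1}\right)\geq n^{p+1}\geq n^{2}$, together with $1+n^{2p+1}\geq2$ and $\delta\leq1$ --- shows $n^{-p}s\left(1+n^{2p+1}\right)\geq1$, whence Markov's inequality yields $\Pr_{\mathbf{x}}\left\{N\left(\mathbf{x}\right)>n^{-p}s\right\}\leq e^{-2ns^{2}}/\left(n^{-p}s\right)\leq\delta$.

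The decisive point is that $N\left(\mathbf{x}\right)$ carries no dependence on $r$, so on the single high-probability event $\left\{N\left(\mathbf{x}\right)\leq n^{-p}s\right\}$ the desired inequality holds for every $r$ at once: any $g$ with $L\left(g\right)\leq r$ satisfies either $\hat{L}\left(g,\mathbf{x}\right)\leq L\left(g\right)+s\leq r+s$ or $\hat{L}\left(g,\mathbf{x}\right)-L\left(g\right)>s$, i.e.\ $\left\{g:L\left(g\right)\leq r\right\}\subseteq\left\{g:\hat{L}\left(g,\mathbf{x}\right)\leq r+s\right\}\cup\left\{g:\hat{L}\left(g,\mathbf{x}\right)-L\left(g\right)>s\right\}$, and taking $\pi$-measure gives $\varphi\left(r\right)\leq\hat{\varphi}\left(r+s,\mathbf{x}\right)+N\left(\mathbf{x}\right)\leq\hat{\varphi}\left(r+s,\mathbf{x}\right)+n^{-p}s$, which is (i). Part (ii) is the mirror image: put $N'\left(\mathbf{x}\right):=\pi\left\{g:L\left(g\right)-\hat{L}\left(g,\mathbf{x}\right)>s\right\}$, run the same interchange-plus-Markov argument with the lower-tail Hoeffding bound, and use $\left\{g:\hat{L}\left(g,\mathbf{x}\right)\leq r\right\}\subseteq\left\{g:L\left(g\right)\leq r+s\right\}\cup\left\{g:L\left(g\right)-\hat{L}\left(g,\mathbf{x}\right)>s\right\}$ to obtain $\hat{\varphi}\left(r,\mathbf{x}\right)\leq\varphi\left(r+s\right)+n^{-p}s$.

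I do not foresee a genuine obstacle; the one place needing care is the arithmetic ensuring $\left(1+n^{2p+1}\right)n^{-p}s\geq1$ --- that is, checking that the extra ``$+1$'' and the exponent $2p+1$ inside the logarithm in $s\left(n,\delta,p\right)$ are exactly tuned so that the Markov step closes at confidence $\delta$ (and, if one wants the clean statement for very small $n$, handling $n=1$ directly rather than through the relaxation $n^{p+1}\geq n^{2}$). The conceptual content, by contrast, is merely the observation that the ``bad'' prior mass of hypotheses can be controlled uniformly, so that no union bound or discretisation over the continuum of thresholds $r$ is required.
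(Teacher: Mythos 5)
Your argument is correct (under the implicit boundedness $\ell\in[0,1]$, which the paper's own Hoeffding step also requires) and the reduction to the $r$-free quantity $N(\mathbf{x})=\pi\{g:\hat L(g,\mathbf{x})-L(g)>s\}$ is the same decisive observation the paper uses. Where you diverge is in how $\Pr_{\mathbf{x}}\{N(\mathbf{x})>t\}$ is controlled. The paper draws an auxiliary i.i.d.\ sample $\mathbf{h}\sim\pi^{K}$ with $K=n^{2p+1}$, splits via a union bound into (a) the event that the empirical $K$-sample frequency deviates from $N(\mathbf{x})$ by more than $t$, controlled by Hoeffding in $\mathbf{h}$, and (b) the event that any of the $K$ sampled hypotheses has $\hat L(h_{k},\mathbf{x})-L(h_{k})>s$, controlled by Hoeffding in $\mathbf{x}$ plus a union bound over $k$; with $K=n^{2p+1}$ and $t=n^{-p}s$ this sums to exactly $(1+n^{2p+1})e^{-2ns^{2}}=\delta$ for every $n\geq1$. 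You instead bound $\mathbb{E}_{\mathbf{x}}[N(\mathbf{x})]\leq e^{-2ns^{2}}$ by Tonelli plus a single pointwise Hoeffding and then apply Markov, which is shorter and avoids the auxiliary sampling entirely. The price is a slightly looser match to the stated constants: your Markov step needs $(1+n^{2p+1})n^{-p}s\geq1$, which holds for $n\geq2$ but not for $n=1$ and arbitrary $\delta\leq1$, as you already flag. So the ``$+1$'' and the exponent $2p+1$ in $s(n,\delta,p)$ are not in fact tuned for Markov but for the paper's two-term union bound; your route reproduces the proposition cleanly only for $n\geq2$, and a direct check or a marginally different choice of $t$ would be needed to cover $n=1$. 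Modulo that corner case, this is a genuinely simpler proof of the same result.
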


We allow a shift within the cumulative distribution functions of $O\left(
\left( p\ln n\right) /n\right) $ but a shift of the measures smaller by a
factor of $n^{-p}$, where we can choose $p$. This is because of the
magnitudes of numbers we expect. In the numerical example in Section \ref%
{Section low temperatures} we had $r\approx 10^{-3}$, but $\hat{\varphi}%
\left( r,\mathbf{x}\right) =\pi \left\{ g:\hat{L}\left( g,\mathbf{x}\right)
\leq r\right\} \approx 10^{-44}$.

The proof (detailed in Appendix \ref{Section proof of proposition compare
	distribution functions}) first reduces the inequality in (i) to a bound on
the probability $\mu ^{n}\left\{ \mathbf{x}:\pi \left\{ h:\hat{L}\left( h,%
\mathbf{x}\right) -L\left( h\right) >s\right\} >t\right\} $. This is then
bounded by approximating $\pi $ with $\left( 1+n^{2p+1}\right) $ trials, for
each trial $h_{k}\sim \pi $ estimating $\mu ^{n}\left\{ \hat{L}\left( h_{k},%
\mathbf{x}\right) -L\left( h_{k}\right) >s\right\} $ with Hoeffding's
inequality and concluding with a union bound over the trials.

For difficult or impossible tasks, such as randomly permuted labels, $%
L_{\min }=~$ess$~\inf_{h\in \mathcal{H}}L\left( h\right) $ is large, and $%
\varphi \left( r\right) =0$ for $r<L_{\min }$. But for overparametrized $%
\mathcal{H}$ and large $\beta $ it may yet happen that $\hat{L}_{\min
}\left( \mathbf{x}\right) $ is small or even zero. Proposition \ref%
{Proposition compare distribution functions} (ii) then still guarantees with
high probability 
\begin{equation*}
	\hat{\varphi}\left( L_{\min }-t\left( n,\delta \right) ,\mathbf{x}\right)
	\leq n^{-p}s\left( n,\delta ,p\right) ,
\end{equation*}%
so for randomly permuted labels the total prior volume of hypotheses with
small empirical error is necessarily small and decreases with the sample
size, regardless of the fact that we can find small minima of the empirical
error. In this sense Theorem \textbf{\ref{Theorem Main}} and Proposition \ref%
{Proposition compare distribution functions} predict narrow minima for
random labels. Figures 7 and 8 in \cite{zhang2018theory} illustrate this
point.

With Proposition \ref{Proposition compare distribution functions} (i) at
hand a union bound gives the following corollary of Theorem \ref{Theorem
	Main}.

\begin{corollary}
	Let $F$ be a measurable function on $\mathcal{H}\times \mathcal{X}^{n}$. For 
	$\delta >0$ and $p\in \mathbb{N}$ and $\left( h,\mathbf{x}\right) \in 
	\mathcal{H}\times \mathcal{X}^{n}$ let%
	\begin{equation*}
		S\left( h,\mathbf{x}\right) =\left\{ r:\varphi \left( \hat{L}\left( h,%
		\mathbf{x}\right) +r-s\left( n,\delta ,p\right) \right) -n^{-p}s\left(
		n,\delta ,p\right) >0\right\} .
	\end{equation*}%
	Then with probability at least $1-\delta $ as $\mathbf{x}\sim \mu ^{n}$ 
	\begin{eqnarray*}
		F\left( h,\mathbf{x}\right) &\leq &\inf_{r\in S\left( h,\mathbf{x}\right)
		}\beta r+\ln \frac{1}{\varphi \left( \hat{L}\left( h,\mathbf{x}\right)
			+r-s\left( n,\delta ,p\right) \right) -n^{-p}s\left( n,\delta ,p\right) } \\
		&&+\ln E_{\mathbf{x}}E_{h\sim \pi }\left[ e^{F\left( h,\mathbf{x}\right) }%
		\right] +\ln \left( 2/\delta \right) .
	\end{eqnarray*}
\end{corollary}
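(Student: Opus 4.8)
The plan is to derive this simply by combining Theorem \ref{Theorem Main} with Proposition \ref{Proposition compare distribution functions} (i) through a union bound, applying each at confidence level $\delta /2$. Let $E_{1}\subseteq \mathcal{H}\times \mathcal{X}^{n}$ be the event on which the conclusion of Theorem \ref{Theorem Main} holds with $\delta $ replaced by $\delta /2$, so $\rho \left( E_{1}\right) \geq 1-\delta /2$, and let $E_{2}\subseteq \mathcal{X}^{n}$ be the event on which the conclusion of Proposition \ref{Proposition compare distribution functions} (i) holds with $\delta $ replaced by $\delta /2$, so $\mu ^{n}\left( E_{2}\right) \geq 1-\delta /2$. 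Lifting $E_{2}$ to $\mathcal{H}\times \mathcal{X}^{n}$ as $\mathcal{H}\times E_{2}$, the intersection $E_{1}\cap \left( \mathcal{H}\times E_{2}\right) $ has $\rho $-probability at least $1-\delta $, and on it both conclusions are simultaneously available for the drawn pair $\left( h,\mathbf{x}\right) $.

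On this event I would substitute the data-independent lower bound for $\hat{\varphi}$ into the bound of Theorem \ref{Theorem Main}. Writing $s=s\left( n,\delta ,p\right) $, for every $r\in S\left( h,\mathbf{x}\right) $ the inequality of Proposition \ref{Proposition compare distribution functions} (i), applied at the point $r^{\prime }=\hat{L}\left( h,\mathbf{x}\right) +r-s$, gives
\begin{equation*}
	\hat{\varphi}\left( \hat{L}\left( h,\mathbf{x}\right) +r,\mathbf{x}\right) \geq \varphi \left( \hat{L}\left( h,\mathbf{x}\right) +r-s\right) -n^{-p}s>0,
\end{equation*}
where positivity is exactly the defining condition of $S\left( h,\mathbf{x}\right) $. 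Hence for each such $r$,
\begin{equation*}
	\beta r+\ln \frac{1}{\hat{\varphi}\left( \hat{L}\left( h,\mathbf{x}\right) +r,\mathbf{x}\right) }\leq \beta r+\ln \frac{1}{\varphi \left( \hat{L}\left( h,\mathbf{x}\right) +r-s\right) -n^{-p}s}.
\end{equation*}
Taking the infimum over $r\in S\left( h,\mathbf{x}\right) $ on both sides, and using that enlarging the feasible set of an infimum can only decrease it, so that $\inf_{r\in \mathbb{R}}\left[ \,\cdots \hat{\varphi}\cdots \,\right] \leq \inf_{r\in S\left( h,\mathbf{x}\right) }\left[ \,\cdots \hat{\varphi}\cdots \,\right] $, one chains these estimates with the bound of Theorem \ref{Theorem Main} (where $\ln \left( 1/\delta \right) $ is now $\ln \left( 2/\delta \right) $) to obtain exactly the asserted inequality.

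The argument is essentially bookkeeping, so I do not expect a genuine obstacle; the only point requiring care is the domain restriction to $S\left( h,\mathbf{x}\right) $. If $S\left( h,\mathbf{x}\right) =\emptyset $ the right-hand side is $+\infty $ and the claim is vacuously true; otherwise one checks that the reparametrization $r\mapsto \hat{L}\left( h,\mathbf{x}\right) +r-s$ covers the relevant range of arguments of $\hat{\varphi}$, which it does since Proposition \ref{Proposition compare distribution functions} (i) holds for \emph{all} $r\in \mathbb{R}$. One should also note that, strictly speaking, the resulting high-probability statement is over the pair $\left( h,\mathbf{x}\right) \sim \rho $ (equivalently $\mathbf{x}\sim \mu ^{n}$ and $h\sim \hat{G}_{\beta }\left( \mathbf{x}\right) $), the $h$-dependence of the right-hand side entering only through $\hat{L}\left( h,\mathbf{x}\right) $, $F\left( h,\mathbf{x}\right) $ and $S\left( h,\mathbf{x}\right) $.
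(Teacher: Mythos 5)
Your proof is correct and follows the route the paper itself indicates: a union bound combining Theorem \ref{Theorem Main} (at confidence $\delta/2$, whence the $\ln(2/\delta)$) with Proposition \ref{Proposition compare distribution functions}(i), followed by the shift $r'=\hat{L}(h,\mathbf{x})+r-s$ and the observation that restricting the infimum from $\mathbb{R}$ to $S(h,\mathbf{x})$ can only enlarge the right-hand side. One small calibration point worth flagging (the printed corollary has it too, so it is not a defect of your argument alone): applying Proposition \ref{Proposition compare distribution functions}(i) at confidence level $\delta/2$ strictly yields $s(n,\delta/2,p)$ rather than $s(n,\delta,p)$, so the statement as written silently absorbs an extra $\ln 2$ under the square root.
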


The fact that Proposition \ref{Proposition compare distribution functions}
allows a shift of $O\left( \sqrt{\left( p\ln n\right) /n}\right) $ makes the
distribution-dependent bound above somewhat loose for the important small
values of $r$.

\section{Other corollaries of Theorem \textbf{\protect\ref{Theorem Main}} 
	\label{Section Other bounds}}

The freedom in the choice of $F$ allows a number of bounds to be derived
from Theorem \textbf{\ref{Theorem Main}}. A real random variable $Y$ is $%
\sigma $-sub-Gaussian if $\ln \mathbb{E}e^{\lambda Y-\mathbb{E}Y}\leq
\lambda ^{2}\sigma ^{2}/2$ for all $\lambda \in \mathbb{R}$. Now suppose all
the $x\in \mathcal{X\mapsto \ell }\left( h,x\right) $ are $\sigma $%
-sub-Gaussian as $x\sim \mu $. Then $\mathbf{x}\in \mathcal{X}^{n}\mapsto 
\hat{L}\left( h,\mathbf{x}\right) $ as $\mathbf{x}\sim \mu ^{n}$ is $\sigma /%
\sqrt{n}$-sub-Gaussian. It is tempting to set $F=\lambda \left( L\left(
h\right) -\hat{L}\left( h,\mathbf{x}\right) \right) $ in Theorem \ref%
{Theorem Main}, divide by $\lambda $ and then optimize over $\lambda $.
Unfortunately the last step is impossible, since the optimal $\lambda $ is
data-dependent in its dependence on $\Lambda $ and ruins the exponential
moment bound on $F$. A more careful argument given in Appendix \ref{Appendix
	stratify} stratifies the values of $\Lambda $ and establishes the following.

\begin{corollary}
	\label{Corollary Stratify}Suppose that for all $h\in \mathcal{H}$ the random
	variables $x\in \mathcal{X\mapsto \ell }\left( h,x\right) $ as $x\sim \mu $
	are $\sigma $-sub-Gaussian. Then for $\delta >0$ with probability at least $%
	1-\delta $ as $\left( h,\mathbf{x}\right) \sim \rho $%
	\begin{equation*}
		\left\vert L\left( h\right) -\hat{L}\left( h,\mathbf{x}\right) \right\vert
		\leq 2\sigma \sqrt{\frac{\max \left\{ \Lambda _{\beta }\left( h,\mathbf{x}%
				\right) ,1\right\} +\ln \left( 2\Lambda _{\beta }\left( h,\mathbf{x}\right)
				/\delta \right) /2}{n}}.
	\end{equation*}
\end{corollary}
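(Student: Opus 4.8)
The plan is to apply Theorem \ref{Theorem Main} not with a single scalar $\lambda$ but with a countable family of choices, one for each dyadic ``stratum'' of the random quantity $\Lambda_\beta(h,\mathbf{x})$, and then pay only a logarithmic union-bound price. Concretely, for $k \in \mathbb{N}$ set $\lambda_k$ to be the value of the multiplier that would be optimal if $\Lambda_\beta(h,\mathbf{x})$ lay in the interval $[k-1,k)$, i.e. roughly $\lambda_k \asymp \sqrt{n\,\max\{k,1\}}/\sigma$. For each $k$ I would invoke Theorem \ref{Theorem Main} twice, with $F(h,\mathbf{x}) = \pm\lambda_k\big(L(h)-\hat L(h,\mathbf{x})\big)$, and confidence parameter $\delta_k := \delta/(2 c_k)$ for suitable weights $c_k$ summing to something like $2$ over both signs (e.g. $c_k \asymp k(k+1)$ or simply absorb the factor into the $\ln(2\Lambda_\beta/\delta)$ appearing in the statement — indeed the stated bound has $\ln(2\Lambda_\beta(h,\mathbf{x})/\delta)$, which is exactly what a weight $c_k = 2k$ on stratum $k$ produces after noticing that on the relevant event $k \le \Lambda_\beta(h,\mathbf{x})+1$).

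The second ingredient is the sub-Gaussian moment bound: since each $x \mapsto \ell(h,x)$ is $\sigma$-sub-Gaussian, the average $\mathbf{x}\mapsto \hat L(h,\mathbf{x})$ is $\sigma/\sqrt n$-sub-Gaussian, hence $\mathbb{E}_{\mathbf{x}}\big[e^{\pm\lambda_k(L(h)-\hat L(h,\mathbf{x}))}\big] \le e^{\lambda_k^2\sigma^2/(2n)}$ for every fixed $h$; taking $\mathbb{E}_{g\sim\pi}$ preserves this since the bound is uniform in $h$. Plugging into (\ref{Main bound short version}) gives, on the stratum-$k$ event, with probability $1-\delta_k$ (for each sign),
\begin{equation*}
	\pm\lambda_k\big(L(h)-\hat L(h,\mathbf{x})\big) \le \Lambda_\beta(h,\mathbf{x}) + \frac{\lambda_k^2\sigma^2}{2n} + \ln(1/\delta_k).
\end{equation*}
Dividing by $\lambda_k$ and combining the two signs yields a bound on $|L(h)-\hat L(h,\mathbf{x})|$ of the form $\big(\Lambda_\beta(h,\mathbf{x})+\ln(1/\delta_k)\big)/\lambda_k + \lambda_k\sigma^2/(2n)$. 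Now restrict to the event $\Lambda_\beta(h,\mathbf{x}) \in [k-1,k)$: here $\Lambda_\beta(h,\mathbf{x}) \le \max\{\Lambda_\beta(h,\mathbf{x}),1\} \le \max\{k,1\}$, and with $\lambda_k = \sqrt{2n\max\{k,1\}}/\sigma$ (or a constant multiple thereof) the two terms balance and produce the claimed $2\sigma\sqrt{(\max\{\Lambda_\beta,1\}+\ln(2\Lambda_\beta/\delta)/2)/n}$ after substituting $\ln(1/\delta_k) = \ln(2c_k/\delta)$ with $c_k$ chosen so that $\ln c_k \le \ln(\Lambda_\beta(h,\mathbf{x}))$ on that stratum — e.g. $c_k = k \le \Lambda_\beta(h,\mathbf{x})+1$, which is what makes the somewhat unusual $\ln(2\Lambda_\beta(h,\mathbf{x})/\delta)$ term appear rather than $\ln(2/\delta)$.

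The main obstacle is bookkeeping around the degenerate small-$\Lambda$ stratum and the choice of weights $c_k$: one must ensure that the union bound $\sum_k \delta_k \le \delta$ holds while keeping $\ln(1/\delta_k)$ no larger than $\ln(2\Lambda_\beta(h,\mathbf{x})/\delta)$ on the event that $\Lambda_\beta(h,\mathbf{x})$ falls in stratum $k$ — this forces $c_k$ to grow no faster than linearly, which is compatible with summability only because we may rescale (the harmonic-type divergence is avoided by noting we only need $\sum_k \delta_k$ finite up to a constant absorbable into $\delta$, or by using weights $c_k \propto k^2$ and checking the resulting $\ln$ is still dominated — the cleaner route is $c_k = k$ together with the observation that strata with $\Lambda_\beta(h,\mathbf{x}) < 1$ are handled by the $k=1$ choice and the $\max\{\cdot,1\}$ in the statement). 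The rest — the sub-Gaussian moment computation, the interchange of the two expectations, and the arithmetic of balancing $\lambda_k$ — is routine.
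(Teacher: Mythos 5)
Your overall strategy — stratify over the possible values of the random quantity $\Lambda _{\beta }\left( h,\mathbf{x}\right) $, apply Theorem \ref{Theorem Main} once per stratum and per sign with a stratum-tuned multiplier $\lambda $, and pay a logarithmic union-bound price — is exactly the idea the paper uses, and your sub-Gaussian moment bound with the Fubini interchange is also the same. So you have identified the key difficulty (the optimal $\lambda $ is data-dependent and cannot be plugged directly into Theorem \ref{Theorem Main}) and the right remedy (stratification).

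Where your proposal differs, and where it does not quite close, is the shape of the strata and the weights. You stratify $\Lambda _{\beta }$ into \emph{linear} intervals $[k-1,k)$ with weights $c_{k}$. You then observe yourself that $c_{k}\propto k$ would give the desired $\ln \left( \Lambda _{\beta }/\delta \right) $ behavior but is not summable, and float $c_{k}\propto k^{2}$ as a fallback. The fallback overshoots: it produces $\ln \left( 1/\delta _{k}\right) \approx 2\ln k+\ln \left( 1/\delta \right) $, while on stratum $k$ the target term $\ln \left( 2\Lambda _{\beta }/\delta \right) /2$ is about $\tfrac12\ln \left( k-1\right) +\tfrac12\ln \left( 2/\delta \right) $, and the only slack available is $\max \left\{ \Lambda _{\beta },1\right\} -k/2 \ge k/2-1$ inside the square root, which does not absorb the extra $\tfrac{3}{2}\ln k$ uniformly in $k$ with the stated constant $2\sigma $. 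More generally, any weight sequence $c_{k}$ over linear strata that is summable must grow strictly faster than $k$, so $\ln \left( 1/\delta _{k}\right) $ exceeds $\ln k\approx \ln \Lambda _{\beta }$ by a factor bounded away from $1$, and the resulting bound has a constant worse than the one claimed. The paper avoids this by invoking Lemma 15.6 of Anthony and Bartlett with events $E\left( \alpha _{1},\alpha _{2},\delta \right) $ indexed by a continuous $\alpha \in \left( 0,1\right] $ and $\Lambda _{\beta }\leq \alpha ^{-1}$; internally this is a \emph{geometric} stratification. With geometric strata, the stratum index $i$ has $\Lambda _{\beta }\asymp a^{-i}$, the geometric weights $\delta a^{i}\left( 1-a\right) $ sum to $\delta $ exactly, and $\ln \left( 1/\delta _{i}\right) \approx \ln \left( \Lambda _{\beta }/\delta \right) $ with no overhead, which is precisely what produces the $\ln \left( 2\Lambda _{\beta }/\delta \right) /2$ term with the factor $2\sigma $ in front. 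So the gap in your plan is concrete: linear strata with polynomially growing weights cannot match the stated constants; you need geometric strata (or, equivalently, to invoke an Anthony–Bartlett-type model-selection lemma as the paper does). You flag this as "the main obstacle," and it is indeed the step that must be reworked.
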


Similar techniques lead to bounds for sub-exponential losses. Here we only
give a weak bound with the following direct and crude argument. Set $F\left(
h,\mathbf{x}\right) =\sqrt{n}\left( L\left( h\right) -\hat{L}\left( h,%
\mathbf{x}\right) \right) $ and use Proposition 2.7.1 and (2.24) in \cite%
{vershynin2018high} with $\lambda =\sqrt{n}$ to obtain the following.

\begin{corollary}
	If $\delta >0$ then there exist absolute constants $c_{1}$ and $c_{2}$ such
	that for $\sqrt{n}\geq c_{2}\sup_{g\in \mathcal{H}}\left\Vert
	Y_{g}\right\Vert _{\psi _{1}}$ and $\delta >0$ with probability at least $%
	1-\delta $ as $X\sim \mu ^{n}$ and $h\sim \hat{G}_{\beta }\left( \mathbf{x}%
	\right) $ 
	\begin{equation*}
		L\left( h\right) -\hat{L}\left( h,\mathbf{x}\right) \leq \frac{\Lambda
			_{\beta }\left( h,\mathbf{x}\right) +c_{1}\sup_{g\in \mathcal{H}}\left\Vert
			Y_{g}\right\Vert _{\psi _{1}}+\ln \left( 1/\delta \right) }{\sqrt{n}},
	\end{equation*}%
	where $Y_{g}$ is the random variable $x\in \mathcal{X\mapsto \ell }\left(
	g,x\right) $ as $x\sim \mu $.
\end{corollary}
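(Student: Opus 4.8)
The plan is to apply Theorem \ref{Theorem Main} directly with the concrete choice $F\left( h,\mathbf{x}\right) =\sqrt{n}\left( L\left( h\right) -\hat{L}\left( h,\mathbf{x}\right) \right) $ and then bound the exponential moment $\ln \mathbb{E}_{\mathbf{x}}\mathbb{E}_{g\sim \pi }\left[ e^{F\left( g,\mathbf{x}\right) }\right] $ using the sub-exponential tail/moment machinery of \cite{vershynin2018high}. First I would interchange the two expectations (Tonelli, since the integrand is nonnegative) so that it suffices to control, for each fixed $g\in \mathcal{H}$, the quantity $\mathbb{E}_{\mathbf{x}\sim \mu ^{n}}\left[ \exp \left( \sqrt{n}\left( L\left( g\right) -\hat{L}\left( g,\mathbf{x}\right) \right) \right) \right] $. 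Writing $Y_{g}$ for the random variable $x\mapsto \ell \left( g,x\right) $, note $\sqrt{n}\left( L\left( g\right) -\hat{L}\left( g,\mathbf{x}\right) \right) =\frac{1}{\sqrt{n}}\sum_{i=1}^{n}\left( \mathbb{E}Y_{g}-\ell \left( g,x_{i}\right) \right) $, a normalized sum of $n$ iid centered sub-exponential variables, each with $\psi _{1}$-norm at most $\left\Vert Y_{g}\right\Vert _{\psi _{1}}$ (centering changes the norm only by an absolute constant).

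The core estimate is then the centered version of the Bernstein-type moment bound: Proposition 2.7.1 of \cite{vershynin2018high} gives that for a centered sub-exponential variable $Z$ with $\left\Vert Z\right\Vert _{\psi _{1}}=K$ one has $\mathbb{E}\left[ e^{\lambda Z}\right] \leq \exp \left( C\lambda ^{2}K^{2}\right) $ whenever $\left\vert \lambda \right\vert \leq c/K$, for absolute constants $C,c$. Applying this coordinatewise with $\lambda =1/\sqrt{n}$ and using independence across $i$, the product of the $n$ factors is at most $\exp \left( C\left( 1/\sqrt{n}\right) ^{2}n\left\Vert Y_{g}\right\Vert _{\psi _{1}}^{2}\right) =\exp \left( C\left\Vert Y_{g}\right\Vert _{\psi _{1}}^{2}\right) $, provided $1/\sqrt{n}\leq c/\left\Vert Y_{g}\right\Vert _{\psi _{1}}$, i.e. $\sqrt{n}\geq c^{-1}\left\Vert Y_{g}\right\Vert _{\psi _{1}}$; this is where the hypothesis $\sqrt{n}\geq c_{2}\sup_{g}\left\Vert Y_{g}\right\Vert _{\psi _{1}}$ enters, with $c_{2}=1/c$. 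Taking the supremum over $g$ inside the exponent and then the expectation over $g\sim \pi $ gives $\mathbb{E}_{\mathbf{x}}\mathbb{E}_{g\sim \pi }\left[ e^{F\left( g,\mathbf{x}\right) }\right] \leq \exp \left( C\sup_{g\in \mathcal{H}}\left\Vert Y_{g}\right\Vert _{\psi _{1}}^{2}\right) $, so $\ln \mathbb{E}_{\mathbf{x}}\mathbb{E}_{g\sim \pi }\left[ e^{F\left( g,\mathbf{x}\right) }\right] \leq C\sup_{g}\left\Vert Y_{g}\right\Vert _{\psi _{1}}^{2}$. Plugging into the short form (\ref{Main bound short version}) of Theorem \ref{Theorem Main} yields, with probability at least $1-\delta $, that $\sqrt{n}\left( L\left( h\right) -\hat{L}\left( h,\mathbf{x}\right) \right) \leq \Lambda _{\beta }\left( h,\mathbf{x}\right) +C\sup_{g}\left\Vert Y_{g}\right\Vert _{\psi _{1}}^{2}+\ln \left( 1/\delta \right) $; dividing by $\sqrt{n}$ gives the claimed inequality with $c_{1}$ absorbing $C$ (after noting $\sup_{g}\left\Vert Y_{g}\right\Vert _{\psi _{1}}^{2}$ versus $\sup_{g}\left\Vert Y_{g}\right\Vert _{\psi _{1}}$ can be reconciled by the scale normalization one is free to choose, or by stating the constant against the square — I would match the exact form by carrying $\left\Vert Y_{g}\right\Vert _{\psi _{1}}^{2}$ through, or renormalizing so the bound is written as in the statement).

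The main obstacle is purely bookkeeping around the sub-exponential constants: the cited Proposition 2.7.1 and equation (2.24) in \cite{vershynin2018high} are stated with various (equivalent up to absolute constants) parametrizations of the $\psi _{1}$-norm and with the moment-generating-function bound valid only in a bounded range of $\lambda $, so one must (a) verify that centering the $\ell \left( g,\cdot \right) $ costs only an absolute-constant factor in the $\psi _{1}$-norm, (b) confirm the range condition $\left\vert \lambda \right\vert \leq c/K$ is exactly what the hypothesis $\sqrt{n}\geq c_{2}\sup_{g}\left\Vert Y_{g}\right\Vert _{\psi _{1}}$ guarantees at $\lambda =1/\sqrt{n}$, and (c) reconcile the appearance of $\left\Vert Y_{g}\right\Vert _{\psi _{1}}^{2}$ in the natural derivation with the linear $\left\Vert Y_{g}\right\Vert _{\psi _{1}}$ written in the corollary — this is why the statement only claims the existence of absolute constants $c_{1},c_{2}$ rather than pinning them down, and why it is flagged as ``crude''. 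None of this is deep; it is the standard price of invoking a black-box concentration inequality, and the argument is otherwise an immediate specialization of Theorem \ref{Theorem Main}.
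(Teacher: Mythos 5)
Your proposal reproduces, step by step, exactly the one-line sketch the paper gives in the text preceding the corollary: set $F=\sqrt{n}\bigl(L-\hat L\bigr)$, interchange the two expectations (the paper does this silently), and invoke the centered sub-exponential MGF bound per coordinate at $\lambda=1/\sqrt n$ (equivalently, the paper's phrasing ``$\lambda=\sqrt n$'' applied to the average), using independence to multiply the $n$ factors and the range condition $|\lambda|\le c/K$ to account for the hypothesis $\sqrt n\ge c_2\sup_g\|Y_g\|_{\psi_1}$. The paper offers no detailed proof of this corollary, so your write-up is in effect the proof, and it is correct.

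One point deserves a sharper statement than your hedge allows. You observe that the derivation naturally yields $\sup_g\|Y_g\|_{\psi_1}^2$ in the exponential-moment bound, whereas the corollary states $c_1\sup_g\|Y_g\|_{\psi_1}$, and you suggest this can be ``reconciled by the scale normalization one is free to choose.'' It cannot. Rescaling the loss by $K=\sup_g\|Y_g\|_{\psi_1}$ makes the $\psi_1$-norm unit, but it simultaneously rescales $L-\hat L$ by $K$ and changes $\Lambda_\beta$ (since $\beta$ must co-rescale to keep the Gibbs posterior fixed), so you end up with $\bigl(K\Lambda+c_1 K+K\ln(1/\delta)\bigr)/\sqrt n$, not the stated form. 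Nor does the constraint $\sqrt n\ge c_2 K$ help, since $K^2/\sqrt n\le K/c_2$ replaces the $1/\sqrt n$ decay by a constant. The honest conclusion of the derivation you (and the paper) sketch is
\begin{equation*}
L(h)-\hat L(h,\mathbf x)\;\le\;\frac{\Lambda_\beta(h,\mathbf x)+c_1\sup_{g\in\mathcal H}\|Y_g\|_{\psi_1}^2+\ln(1/\delta)}{\sqrt n},
\end{equation*}
with the $\psi_1$-norm squared. The corollary as printed is therefore best read as having a missing exponent of $2$ (consistent with the paper calling the argument ``crude''); there is no absolute-constant argument that would pass from the squared quantity to the linear one. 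Aside from that, your handling of centering (an absolute-constant change in the $\psi_1$-norm) and of the range condition is exactly what the proof requires.
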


The quantity $\Lambda _{\beta }\left( h,\mathbf{x}\right) $ is oblivious to
the nature of the random variable $\mathbf{x}$, so any method to bound $\ln 
\mathbb{E}_{\mathbf{x}}e^{\lambda \left( L\left( h\right) -\hat{L}\left( h,%
	\mathbf{x}\right) \right) }$ can be used to derive bounds from Theorem 
\textbf{\ref{Theorem Main}}, as long as either $\lambda $ is fixed
beforehand ($\sqrt{n}$ seems always a good choice), or special care is taken
as in the proof of Corollary \ref{Corollary Stratify}, which leads to an
additional logarithmic term in $\Lambda _{\beta }\left( h,\mathbf{x}\right) $%
. In this way we obtain bounds also for martingales or complicated nonlinear
functions of the data, whose exponential moments can be controlled.

Theorem \textbf{\ref{Theorem Main}} highlights the benefit of a well aligned
prior reference distribution $\pi $. Since the bound is data-dependent this
suggests the use of a data-dependent prior $\pi \left( \mathbf{x}\right) $.
Then the expectations in $\ln \mathbb{E}_{\mathbf{x}}\mathbb{E}_{h\sim \pi
	\left( \mathbf{x}\right) }\left[ e^{F\left( h,\mathbf{X}\right) }\right] $
cannot be exchanged, and the situation becomes more complicated. But several
solutions are given in \cite{dziugaite2018data}, \cite{rivasplata2020pac}
and \cite{maurer2024generalization}. Bounds on $\ln \mathbb{E}_{\mathbf{X}}%
\mathbb{E}_{h\sim \pi \left( \mathbf{x}\right) }\left[ e^{F\left( h,\mathbf{X%
	}\right) }\right] $ exist and can be substituted in Theorem \textbf{\ref%
	{Theorem Main}}, for example when the prior is given by Gaussian
randomization of a stable algorithm as described in Section 4 of the last
reference above.

\section{Beyond the Gibbs algorithm\label{Section beyond Gibbs}}

Theorem \textbf{\ref{Theorem Main}} can be extended to other stochastic
algorithms, if they produce densities, which are non-increasing functions of
the empirical loss, satisfying a logarithmic Lipschitz condition.

\begin{theorem}
	\label{Theorem Beyond Gibbs}Suppose that there is a measurable function $q:%
	\left[ 0,\infty \right) \times \mathcal{X}^{n}\rightarrow \left[ 0,\infty
	\right) $ such that for every $\mathbf{x}\in \mathcal{X}^{n}$
	
	(i) $\int_{\mathcal{H}}q\left( \hat{L}\left( h,\mathbf{x}\right) ,\mathbf{x}%
	\right) d\pi \left( h\right) =1$.
	
	(ii) $q\left( t,\mathbf{x}\right) $ is non-increasing in $t\in \left[
	0,\infty \right) $.
	
	(iii) $\ln q\left( t,\mathbf{x}\right) -\ln q\left( s,\mathbf{x}\right) \leq
	\gamma \left( \mathbf{x}\right) \left\vert t-s\right\vert $ for $s,t\in %
	\left[ 0,\infty \right) $.
	
	Let $Q\left( \mathbf{x}\right) $ be the measure on $\Omega $ defined by $%
	Q\left( \mathbf{x}\right) \left( A\right) =\int_{A}q\left( \hat{L}\left( h,%
	\mathbf{x}\right) ,\mathbf{x}\right) d\pi \left( h\right) $. Then $Q\left( 
	\mathbf{x}\right) \in \mathcal{P}\left( \mathcal{H}\right) $ and for $F$ as
	in Theorem \textbf{\ref{Theorem Main}} and $\delta >0$ with probability at
	least $1-\delta $ as $x\sim \mu ^{n}$ and $h\sim Q\left( \mathbf{x}\right) $%
	\begin{equation*}
		F\left( h,\mathbf{x}\right) \leq \Lambda _{\gamma \left( \mathbf{x}\right)
		}\left( h,\mathbf{x}\right) +\ln \mathbb{E}_{\mathbf{x}}\mathbb{E}_{g\sim
			\pi }\left[ e^{F\left( g,\mathbf{x}\right) }\right] +\ln \left( 1/\delta
		\right) .
	\end{equation*}
\end{theorem}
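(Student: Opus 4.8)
The plan is to run the two-step argument of Theorem \ref{Theorem Main} with the Gibbs density $e^{-\beta\hat L(\cdot,\mathbf x)}/Z_\beta(\mathbf x)$ replaced by the general density $h\mapsto q(\hat L(h,\mathbf x),\mathbf x)$, and with the lower bound on the partition function replaced by a bound extracted from the normalisation (i) together with the monotonicity (ii) and the logarithmic Lipschitz property (iii).

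First I would record that $Q(\mathbf x)$ is a probability measure: the function $h\mapsto q(\hat L(h,\mathbf x),\mathbf x)$ is non-negative and measurable (a monotone, hence Borel, function of the measurable map $h\mapsto\hat L(h,\mathbf x)$ by (ii)), and it integrates to $1$ against $\pi$ by (i), so it is the $\pi$-density of $Q(\mathbf x)\in\mathcal P(\mathcal H)$. Define the joint measure $\rho_Q(A)=\mathbb E_{\mathbf x}\mathbb E_{h\sim Q(\mathbf x)}[1_A(h,\mathbf x)]$ on $\mathcal H\times\mathcal X^n$ exactly as in (\ref{Define rho}), and apply Markov's inequality (Lemma \ref{Markov inequality} (i)) on $(\mathcal H\times\mathcal X^n,\rho_Q)$ to the random variable $Y=F(h,\mathbf x)-\ln q(\hat L(h,\mathbf x),\mathbf x)$. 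Since $dQ(\mathbf x)/d\pi=q(\hat L(\cdot,\mathbf x),\mathbf x)$, the change of measure makes the exponential moment collapse: for each $\mathbf x$ one has $\mathbb E_{h\sim Q(\mathbf x)}[e^{Y}]=\int_{\mathcal H}e^{F(h,\mathbf x)}\,q(\hat L(h,\mathbf x),\mathbf x)^{-1}\,dQ(\mathbf x)(h)\le\mathbb E_{g\sim\pi}[e^{F(g,\mathbf x)}]$ (both the integrand and $dQ(\mathbf x)$ are supported where $q>0$, so the factor $q^{-1}$ is harmless). Markov then yields, with probability at least $1-\delta$ in $(h,\mathbf x)\sim\rho_Q$, that $F(h,\mathbf x)-\ln q(\hat L(h,\mathbf x),\mathbf x)\le\ln\mathbb E_{\mathbf x}\mathbb E_{g\sim\pi}[e^{F(g,\mathbf x)}]+\ln(1/\delta)$, i.e.\ $F(h,\mathbf x)\le\ln q(\hat L(h,\mathbf x),\mathbf x)+\ln\mathbb E_{\mathbf x}\mathbb E_{g\sim\pi}[e^{F(g,\mathbf x)}]+\ln(1/\delta)$. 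This is the precise counterpart of (\ref{Main inequality final form}).

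The substantial step is then to show $\ln q(\hat L(h,\mathbf x),\mathbf x)\le\Lambda_{\gamma(\mathbf x)}(h,\mathbf x)$. Fix $\mathbf x$ and put $\ell_0=\hat L(h,\mathbf x)$. For a given $r$ with $\hat\varphi(\ell_0+r,\mathbf x)>0$ I would lower-bound $q$ on the sublevel set $\{g:\hat L(g,\mathbf x)\le\ell_0+r\}$: if $\hat L(g,\mathbf x)\le\ell_0$ then $q(\hat L(g,\mathbf x),\mathbf x)\ge q(\ell_0,\mathbf x)$ by (ii), while if $\ell_0<\hat L(g,\mathbf x)\le\ell_0+r$ then $q(\hat L(g,\mathbf x),\mathbf x)\ge q(\ell_0,\mathbf x)e^{-\gamma(\mathbf x)(\hat L(g,\mathbf x)-\ell_0)}\ge q(\ell_0,\mathbf x)e^{-\gamma(\mathbf x)r}$ by (iii); in either case $q(\hat L(g,\mathbf x),\mathbf x)\ge q(\ell_0,\mathbf x)e^{-\gamma(\mathbf x)\max\{r,0\}}$. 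Integrating the identity (i) over this set gives $1\ge q(\ell_0,\mathbf x)e^{-\gamma(\mathbf x)\max\{r,0\}}\hat\varphi(\ell_0+r,\mathbf x)$, hence $\ln q(\ell_0,\mathbf x)\le\gamma(\mathbf x)r+\ln(1/\hat\varphi(\ell_0+r,\mathbf x))$ for $r\ge0$, and for $r<0$ the weaker estimate $\ln q(\ell_0,\mathbf x)\le\ln(1/\hat\varphi(\ell_0+r,\mathbf x))$, which the $r=0$ instance already subsumes. Taking the infimum over $r$ produces $\ln q(\hat L(h,\mathbf x),\mathbf x)\le\Lambda_{\gamma(\mathbf x)}(h,\mathbf x)$, and substituting into the inequality of the previous paragraph finishes the proof.

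I expect this third step to be the crux. It is the analogue of the partition-function bound $Z_\beta(\mathbf x)\ge e^{-\beta(\hat L(h,\mathbf x)+r)}\hat\varphi(\hat L(h,\mathbf x)+r,\mathbf x)$ in Theorem \ref{Theorem Main}, but there the pointwise estimate $e^{-\beta\hat L(g,\mathbf x)}\ge e^{-\beta(\hat L(h,\mathbf x)+r)}$ on the sublevel set was immediate from the monotonicity of $t\mapsto e^{-\beta t}$, whereas here one needs both (ii), for the hypotheses with empirical loss at most $\ell_0$, and (iii), for the hypotheses whose empirical loss lies between $\ell_0$ and $\ell_0+r$; isolating these two regimes and checking that the sign of $r$ causes no loss (negative $r$ invoking only (ii)) is the delicate part. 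A secondary point to treat carefully is the measurability and integrability needed to form $Q(\mathbf x)$ and to justify the change of measure when $q$ vanishes on part of $\mathcal H$.
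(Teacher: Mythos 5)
Your argument is the paper's argument, up to a cosmetic refinement. You apply Markov to $F-\ln q(\hat L(\cdot,\mathbf x),\mathbf x)$ on the joint space $\mathcal H\times\mathcal X^n$ under $\rho_Q$, cancel $q$ against the $\pi$-density of $Q(\mathbf x)$ so that the exponential moment collapses to $\mathbb E_{\mathbf x}\mathbb E_{g\sim\pi}[e^{F(g,\mathbf x)}]$, and then lower-bound the normalization integral over a sublevel set to get $\ln q(\hat L(h,\mathbf x),\mathbf x)\le\gamma(\mathbf x)r+\ln\bigl(1/\hat\varphi(\hat L(h,\mathbf x)+r,\mathbf x)\bigr)$. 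Where the paper uses (ii) once on the whole sublevel set to get $q(\hat L(g,\mathbf x),\mathbf x)\ge q(\hat L(h,\mathbf x)+r,\mathbf x)$ and then (iii) once to compare $q$ at $\hat L(h,\mathbf x)$ and at $\hat L(h,\mathbf x)+r$, you split the sublevel set into the part below $\hat L(h,\mathbf x)$ (where (ii) alone suffices) and the intermediate band (where (iii) is invoked); both routes land on the identical inequality. Your care with the support of $q$ in the change-of-measure step (replacing an equality by the harmless $\le$) is sound and matches what the paper implicitly needs.

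The one substantive point is the one you raise and then wave away. You correctly observe that the chain only yields $\ln q(\hat L(h,\mathbf x),\mathbf x)\le\gamma(\mathbf x)r+\ln\bigl(1/\hat\varphi(\hat L(h,\mathbf x)+r,\mathbf x)\bigr)$ for $r\ge 0$. But asserting that the $r=0$ instance "subsumes" the $r<0$ cases only says the $r<0$ estimates you derived are weaker; it does not establish the pointwise inequality for negative $r$ that "taking the infimum over $r$" would require. Since $\Lambda_{\gamma(\mathbf x)}$ is an infimum over all real $r$ — and the paper itself stresses (Remark 1 and the Figure~\ref{Landscape picture} caption) that the optimal $r$ can be negative — what you actually prove is $\ln q\le\inf_{r\ge 0}\bigl[\gamma r+\ln(1/\hat\varphi(\hat L(h,\mathbf x)+r,\mathbf x))\bigr]$, which in general strictly exceeds $\Lambda_{\gamma(\mathbf x)}$. (If $q$ is constant on the range of $\hat L$, condition (iii) holds for every $\gamma\ge 0$, while $\Lambda_\gamma$ with the unrestricted infimum can be made arbitrarily negative by choosing $\gamma$ large, even though $\ln q=0$.) You should be aware that the paper's proof has the \emph{same} gap: the step $\ln q(\hat L(h,\mathbf x))-\ln q(\hat L(h,\mathbf x)+r)\le\gamma r$ follows from (iii) only when $r\ge 0$; for $r<0$, (iii) gives $\le-\gamma r$ and (ii) gives $\le 0$, neither of which implies $\le\gamma r$. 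For the Gibbs density the log-decrement is exactly $\gamma r$ for all $r$, so the full infimum is justified there; for general $q$ the one-sided log-Lipschitz condition (iii) does not suffice, and the conclusion would need to be restated with the infimum in $\Lambda$ restricted to $r\ge 0$, or (iii) strengthened to force Gibbs-like exact linear decay.
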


The Gibbs algorithm satisfies the conditions of the theorem with $\gamma
\left( \mathbf{x}\right) =\beta $. One conclusion is that Theorem \textbf{%
	\ref{Theorem Main}} also holds with data-dependent temperature, but Theorem %
\ref{Theorem Beyond Gibbs} gives much greater freedom in the choice of
posteriors. The proof, detailed in Appendix \ref{Proof of Theorem beyond
	Gibbs}, is similar to the proof of Theorem \textbf{\ref{Theorem Main}}.

\section{Related work\label{Section related work}}

The Gibbs algorithm traces its origin to the work of \cite%
{boltzmann1877beziehung} and \cite{gibbs1902elementary} on statistical
mechanics, and its relevance to machine learning was recognized by \cite%
{levin1990statistical} and \cite{opper1991calculation}. \cite%
{mcallester1999pac} realized that the minimizers of the PAC-Bayesian bound
are Gibbs distributions. The fact that they are limiting distributions of
stochastic gradient Langevin dynamics (\cite{raginsky2017non}), raises the
question about the generalization properties of individual hypotheses as
addressed in this paper. Average generalization of the Gibbs posterior was
further studied notably by \cite{aminian2021exact} and \cite%
{aminian2023information}, where there are also investigations into the
limiting behavior as $\beta \rightarrow \infty $.

Theorem \textbf{\ref{Theorem Main}} is part of the circle of information
theoretic ideas in machine learning, ranging from the PAC-Bayesian theorem (%
\cite{shawe1997pac}, \cite{mcallester1999pac}, \cite{mcallester2003pac}, 
\cite{catoni2003pac}) to generalization bounds in terms of mutual
information (\cite{russo2016controlling} and \cite{xu2017information}). It
is inspired by and indebted to the disintegrated PAC-Bayesian bounds as in 
\cite{blanchard2007occam}, \cite{rivasplata2020pac} and \cite%
{viallard2024general}.

The benefit of wide minima was noted by \cite{hochreiter1997flat}, where
also a variant of the Gibbs algorithm was discussed. The idea was promoted
by \cite{keskar2016large} and others \cite{zhang2018theory}, \cite%
{iyer2023wide}. It was soon objected by \cite{dinh2017sharp} that there are
narrow reparametrizations of wide minima which compute the same function.
Several authors then searched for reparametrization-invariant measures of
"width" (\cite{andriushchenko2023modern}, \cite{kristiadi2024geometry}).
Nevertheless it was early conjectured (\cite{neyshabur2017exploring}), that
the relevant property is average width, which is also the position of the
paper at hand.

\section{Conclusion and future directions}

The principal contributions of this paper are the application of
disintegrated PAC-Bayesian bounds to the Gibbs algorithm and the lower bound
on the partition function, which exposes the connection between
generalization and the cumulative distribution function of the empirical
loss.

Recent studies of the loss landscapes of overparametrized non-convex systems
suggest, that global minimizers are generically high-dimensional manifolds (%
\cite{cooper2018loss}, \cite{cooper2021global}, \cite{liu2022loss}). An
interesting future research direction is to investigate the behaviour of the
empirical loss in the neighborhood of these manifolds. 

Another project is the search for lower bounds. Clearly the Gibbs posterior
is unlikely to sample good hypotheses if they are very scarce, and for
bounded losses there is an elementary lower bound on the probability of
sampling a hypothesis with a given upper bound $r$ on the true error. But
this lower bound requires $\varphi \left( r\right) $ to be exponentially
small in $\beta $, which is very far from the bound in Theorem \ref{Theorem
	Main}.

The most important future work is to find a way to obtain quantitative
confirmation of the qualitative predictions made by the paper.

\bibliographystyle{abbrvnat}

\begin{thebibliography}{43}
	\providecommand{\natexlab}[1]{#1}
	\providecommand{\url}[1]{\texttt{#1}}
	\expandafter\ifx\csname urlstyle\endcsname\relax
	\providecommand{\doi}[1]{doi: #1}\else
	\providecommand{\doi}{doi: \begingroup \urlstyle{rm}\Url}\fi
	
	\bibitem[Aminian et~al.(2021)Aminian, Bu, Toni, Rodrigues, and
	Wornell]{aminian2021exact}
	G.~Aminian, Y.~Bu, L.~Toni, M.~Rodrigues, and G.~Wornell.
	\newblock An exact characterization of the generalization error for the gibbs
	algorithm.
	\newblock \emph{Advances in Neural Information Processing Systems},
	34:\penalty0 8106--8118, 2021.
	
	\bibitem[Aminian et~al.(2023)Aminian, Bu, Toni, Rodrigues, and
	Wornell]{aminian2023information}
	G.~Aminian, Y.~Bu, L.~Toni, M.~R. Rodrigues, and G.~W. Wornell.
	\newblock Information-theoretic characterizations of generalization error for
	the gibbs algorithm.
	\newblock \emph{IEEE Transactions on Information Theory}, 2023.
	
	\bibitem[Andriushchenko et~al.(2023)Andriushchenko, Croce, M{\"u}ller, Hein,
	and Flammarion]{andriushchenko2023modern}
	M.~Andriushchenko, F.~Croce, M.~M{\"u}ller, M.~Hein, and N.~Flammarion.
	\newblock A modern look at the relationship between sharpness and
	generalization.
	\newblock \emph{arXiv preprint arXiv:2302.07011}, 2023.
	
	\bibitem[Anthony and Bartlett(1999)]{Anthony99}
	M.~Anthony and P.~Bartlett.
	\newblock \emph{Learning in Neural Networks: Theoretical Foundations}.
	\newblock Cambridge University Press, 1999.
	
	\bibitem[Athreya and Hwang(2010)]{athreya2010gibbs}
	K.~B. Athreya and C.-R. Hwang.
	\newblock Gibbs measures asymptotics.
	\newblock \emph{Sankhya A}, 72:\penalty0 191--207, 2010.
	
	\bibitem[Blanchard and Fleuret(2007)]{blanchard2007occam}
	G.~Blanchard and F.~Fleuret.
	\newblock Occam’s hammer.
	\newblock In \emph{International Conference on Computational Learning Theory},
	pages 112--126. Springer, 2007.
	
	\bibitem[Boltzmann(1877)]{boltzmann1877beziehung}
	L.~Boltzmann.
	\newblock \emph{{\"U}ber die Beziehung zwischen dem zweiten Hauptsatze des
		mechanischen W{\"a}rmetheorie und der Wahrscheinlichkeitsrechnung, respective
		den S{\"a}tzen {\"u}ber das W{\"a}rmegleichgewicht}.
	\newblock Kk Hof-und Staatsdruckerei, 1877.
	
	\bibitem[Catoni(2003)]{catoni2003pac}
	O.~Catoni.
	\newblock A pac-bayesian approach to adaptive classification.
	\newblock \emph{preprint}, 840:\penalty0 2, 2003.
	
	\bibitem[Cooper(2018)]{cooper2018loss}
	Y.~Cooper.
	\newblock The loss landscape of overparameterized neural networks.
	\newblock \emph{arXiv preprint arXiv:1804.10200}, 2018.
	
	\bibitem[Cooper(2021)]{cooper2021global}
	Y.~Cooper.
	\newblock Global minima of overparameterized neural networks.
	\newblock \emph{SIAM Journal on Mathematics of Data Science}, 3\penalty0
	(2):\penalty0 676--691, 2021.
	
	\bibitem[Cristianini and Shawe-Taylor(2000)]{cristianini2000introduction}
	N.~Cristianini and J.~Shawe-Taylor.
	\newblock \emph{An introduction to support vector machines and other
		kernel-based learning methods}.
	\newblock Cambridge university press, 2000.
	
	\bibitem[Dinh et~al.(2017)Dinh, Pascanu, Bengio, and Bengio]{dinh2017sharp}
	L.~Dinh, R.~Pascanu, S.~Bengio, and Y.~Bengio.
	\newblock Sharp minima can generalize for deep nets.
	\newblock In \emph{International Conference on Machine Learning}, pages
	1019--1028. PMLR, 2017.
	
	\bibitem[Dziugaite and Roy(2018)]{dziugaite2018data}
	G.~K. Dziugaite and D.~M. Roy.
	\newblock Data-dependent pac-bayes priors via differential privacy.
	\newblock \emph{Advances in neural information processing systems}, 31, 2018.
	
	\bibitem[Dziugaite et~al.(2020)Dziugaite, Drouin, Neal, Rajkumar, Caballero,
	Wang, Mitliagkas, and Roy]{dziugaite2020search}
	G.~K. Dziugaite, A.~Drouin, B.~Neal, N.~Rajkumar, E.~Caballero, L.~Wang,
	I.~Mitliagkas, and D.~M. Roy.
	\newblock In search of robust measures of generalization.
	\newblock \emph{Advances in Neural Information Processing Systems},
	33:\penalty0 11723--11733, 2020.
	
	\bibitem[Gibbs(1902)]{gibbs1902elementary}
	J.~W. Gibbs.
	\newblock \emph{Elementary principles in statistical mechanics: developed with
		especial reference to the rational foundations of thermodynamics}.
	\newblock C. Scribner's sons, 1902.
	
	\bibitem[Granziol(2020)]{granziol2020flatness}
	D.~Granziol.
	\newblock Flatness is a false friend.
	\newblock \emph{arXiv preprint arXiv:2006.09091}, 2020.
	
	\bibitem[Hochreiter and Schmidhuber(1997)]{hochreiter1997flat}
	S.~Hochreiter and J.~Schmidhuber.
	\newblock Flat minima.
	\newblock \emph{Neural computation}, 9\penalty0 (1):\penalty0 1--42, 1997.
	
	\bibitem[Iyer et~al.(2023)Iyer, Thejas, Kwatra, Ramjee, and
	Sivathanu]{iyer2023wide}
	N.~Iyer, V.~Thejas, N.~Kwatra, R.~Ramjee, and M.~Sivathanu.
	\newblock Wide-minima density hypothesis and the explore-exploit learning rate
	schedule.
	\newblock \emph{Journal of Machine Learning Research}, 24\penalty0
	(65):\penalty0 1--37, 2023.
	
	\bibitem[Keskar et~al.(2016)Keskar, Mudigere, Nocedal, Smelyanskiy, and
	Tang]{keskar2016large}
	N.~S. Keskar, D.~Mudigere, J.~Nocedal, M.~Smelyanskiy, and P.~T.~P. Tang.
	\newblock On large-batch training for deep learning: Generalization gap and
	sharp minima.
	\newblock \emph{arXiv preprint arXiv:1609.04836}, 2016.
	
	\bibitem[Kristiadi et~al.(2024)Kristiadi, Dangel, and
	Hennig]{kristiadi2024geometry}
	A.~Kristiadi, F.~Dangel, and P.~Hennig.
	\newblock The geometry of neural nets' parameter spaces under
	reparametrization.
	\newblock \emph{Advances in Neural Information Processing Systems}, 36, 2024.
	
	\bibitem[Kuzborskij et~al.(2019)Kuzborskij, Cesa-Bianchi, and
	Szepesv{\'a}ri]{kuzborskij2019distribution}
	I.~Kuzborskij, N.~Cesa-Bianchi, and C.~Szepesv{\'a}ri.
	\newblock Distribution-dependent analysis of gibbs-erm principle.
	\newblock In \emph{Conference on Learning Theory}, pages 2028--2054. PMLR,
	2019.
	
	\bibitem[Langford and McAllester(2004)]{langford2004computable}
	J.~Langford and D.~McAllester.
	\newblock Computable shell decomposition bounds.
	\newblock \emph{Journal of Machine Learning Research}, 5\penalty0
	(May):\penalty0 529--547, 2004.
	
	\bibitem[Lever et~al.(2013)Lever, Laviolette, and
	Shawe-Taylor]{lever2013tighter}
	G.~Lever, F.~Laviolette, and J.~Shawe-Taylor.
	\newblock Tighter pac-bayes bounds through distribution-dependent priors.
	\newblock \emph{Theoretical Computer Science}, 473:\penalty0 4--28, 2013.
	
	\bibitem[Levin et~al.(1990)Levin, Tishby, and Solla]{levin1990statistical}
	E.~Levin, N.~Tishby, and S.~A. Solla.
	\newblock A statistical approach to learning and generalization in layered
	neural networks.
	\newblock \emph{Proceedings of the IEEE}, 78\penalty0 (10):\penalty0
	1568--1574, 1990.
	
	\bibitem[Liu et~al.(2022)Liu, Zhu, and Belkin]{liu2022loss}
	C.~Liu, L.~Zhu, and M.~Belkin.
	\newblock Loss landscapes and optimization in over-parameterized non-linear
	systems and neural networks.
	\newblock \emph{Applied and Computational Harmonic Analysis}, 59:\penalty0
	85--116, 2022.
	
	\bibitem[Maurer(2004)]{maurer2004note}
	A.~Maurer.
	\newblock A note on the pac bayesian theorem.
	\newblock \emph{arXiv preprint cs/0411099}, 2004.
	
	\bibitem[Maurer(2024)]{maurer2024generalization}
	A.~Maurer.
	\newblock Generalization of hamiltonian algorithms.
	\newblock \emph{arXiv preprint arXiv:2405.14469}, 2024.
	
	\bibitem[McAllester(1999)]{mcallester1999pac}
	D.~A. McAllester.
	\newblock Pac-bayesian model averaging.
	\newblock In \emph{Proceedings of the twelfth annual conference on
		Computational learning theory}, pages 164--170, 1999.
	
	\bibitem[McAllester(2003)]{mcallester2003pac}
	D.~A. McAllester.
	\newblock Pac-bayesian stochastic model selection.
	\newblock \emph{Machine Learning}, 51\penalty0 (1):\penalty0 5--21, 2003.
	
	\bibitem[Neyshabur et~al.(2017)Neyshabur, Bhojanapalli, McAllester, and
	Srebro]{neyshabur2017exploring}
	B.~Neyshabur, S.~Bhojanapalli, D.~McAllester, and N.~Srebro.
	\newblock Exploring generalization in deep learning.
	\newblock \emph{Advances in neural information processing systems}, 30, 2017.
	
	\bibitem[Opper and Haussler(1991)]{opper1991calculation}
	M.~Opper and D.~Haussler.
	\newblock Calculation of the learning curve of bayes optimal classification
	algorithm for learning a perceptron with noise.
	\newblock In \emph{COLT}, volume~91, pages 75--87, 1991.
	
	\bibitem[Raginsky et~al.(2017)Raginsky, Rakhlin, and
	Telgarsky]{raginsky2017non}
	M.~Raginsky, A.~Rakhlin, and M.~Telgarsky.
	\newblock Non-convex learning via stochastic gradient langevin dynamics: a
	nonasymptotic analysis.
	\newblock In \emph{Conference on Learning Theory}, pages 1674--1703. PMLR,
	2017.
	
	\bibitem[Rivasplata et~al.(2020)Rivasplata, Kuzborskij, Szepesv{\'a}ri, and
	Shawe-Taylor]{rivasplata2020pac}
	O.~Rivasplata, I.~Kuzborskij, C.~Szepesv{\'a}ri, and J.~Shawe-Taylor.
	\newblock Pac-bayes analysis beyond the usual bounds.
	\newblock \emph{Advances in Neural Information Processing Systems},
	33:\penalty0 16833--16845, 2020.
	
	\bibitem[Russo and Zou(2016)]{russo2016controlling}
	D.~Russo and J.~Zou.
	\newblock Controlling bias in adaptive data analysis using information theory.
	\newblock In \emph{Artificial Intelligence and Statistics}, pages 1232--1240.
	PMLR, 2016.
	
	\bibitem[Shawe-Taylor and Williamson(1997)]{shawe1997pac}
	J.~Shawe-Taylor and R.~C. Williamson.
	\newblock A pac analysis of a bayesian estimator.
	\newblock In \emph{Proceedings of the tenth annual conference on Computational
		learning theory}, pages 2--9, 1997.
	
	\bibitem[Tolstikhin and Seldin(2013)]{tolstikhin2013pac}
	I.~O. Tolstikhin and Y.~Seldin.
	\newblock Pac-bayes-empirical-bernstein inequality.
	\newblock \emph{Advances in Neural Information Processing Systems}, 26, 2013.
	
	\bibitem[Vershynin(2018)]{vershynin2018high}
	R.~Vershynin.
	\newblock \emph{High-dimensional probability: An introduction with applications
		in data science}, volume~47.
	\newblock Cambridge university press, 2018.
	
	\bibitem[Viallard et~al.(2024{\natexlab{a}})Viallard, Emonet, Habrard, Morvant,
	and Zantedeschi]{viallard2024leveraging}
	P.~Viallard, R.~Emonet, A.~Habrard, E.~Morvant, and V.~Zantedeschi.
	\newblock Leveraging pac-bayes theory and gibbs distributions for
	generalization bounds with complexity measures.
	\newblock In \emph{International conference on artificial intelligence and
		statistics}, pages 3007--3015. PMLR, 2024{\natexlab{a}}.
	
	\bibitem[Viallard et~al.(2024{\natexlab{b}})Viallard, Germain, Habrard, and
	Morvant]{viallard2024general}
	P.~Viallard, P.~Germain, A.~Habrard, and E.~Morvant.
	\newblock A general framework for the practical disintegration of pac-bayesian
	bounds.
	\newblock \emph{Machine Learning}, 113\penalty0 (2):\penalty0 519--604,
	2024{\natexlab{b}}.
	
	\bibitem[Wu et~al.(2017)Wu, Zhu, et~al.]{wu2017towards}
	L.~Wu, Z.~Zhu, et~al.
	\newblock Towards understanding generalization of deep learning: Perspective of
	loss landscapes.
	\newblock \emph{arXiv preprint arXiv:1706.10239}, 2017.
	
	\bibitem[Xu and Raginsky(2017)]{xu2017information}
	A.~Xu and M.~Raginsky.
	\newblock Information-theoretic analysis of generalization capability of
	learning algorithms.
	\newblock \emph{Advances in neural information processing systems}, 30, 2017.
	
	\bibitem[Zhang et~al.(2018)Zhang, Liao, Rakhlin, Miranda, Golowich, and
	Poggio]{zhang2018theory}
	C.~Zhang, Q.~Liao, A.~Rakhlin, B.~Miranda, N.~Golowich, and T.~Poggio.
	\newblock Theory of deep learning iib: Optimization properties of sgd.
	\newblock \emph{arXiv preprint arXiv:1801.02254}, 2018.
	
	\bibitem[Zhang et~al.(2021)Zhang, Reid, P{\'e}rez, and
	Louis]{zhang2021flatness}
	S.~Zhang, I.~Reid, G.~V. P{\'e}rez, and A.~Louis.
	\newblock Why flatness does and does not correlate with generalization for deep
	neural networks.
	\newblock \emph{arXiv preprint arXiv:2103.06219}, 2021.
	
\end{thebibliography}

\newpage
\appendix

\section{Table of notation\protect\bigskip}

,%
\begin{tabular}{l|l}
	\hline
	$\mathcal{X}$ & space of data \\ \hline
	$\mu $ & probability of data \\ \hline
	$n$ & sample size \\ \hline
	$\mathbf{x}$ & generic member $\left( x_{1},...,x_{n}\right) \in \mathcal{X}%
	^{n}$ \\ \hline
	$\mathbf{x}$ & training set $\mathbf{x}=\left( X_{1},...,X_{n}\right) \sim
	\mu ^{n}$ \\ \hline
	$\mathcal{H}$ & hypothesis space$:\mathcal{X\rightarrow }\left[ 0,\infty
	\right) $) \\ \hline
	$h,g$ & members of $\mathcal{H}$ \\ \hline
	$\ell $ & loss function $\ell :\mathcal{H\times X}\rightarrow \left[
	0,\infty \right) $ \\ \hline
	$h\left( x\right) ,g\left( x\right) $ & shorthand for $\ell \left(
	h,x\right) ,\ell \left( g,x\right) $ \\ \hline
	$\mathcal{P}\left( \mathcal{H}\right) $ & probability measures on $\mathcal{H%
	}$ \\ \hline
	$\pi $ & prior reference measure on $\mathcal{H}$ \\ \hline
	$L\left( h\right) $ & $L\left( h\right) =\mathbb{E}_{x\sim \mu }\left[
	h\left( x\right) \right] =\mathbb{E}_{x\sim \mu }\left[ \ell \left(
	h,x\right) \right] $, expected (true) risk of $h\in \mathcal{H}$ \\ \hline
	$\hat{L}\left( h,\mathbf{x}\right) $ & $\hat{L}\left( h,\mathbf{x}\right)
	=\left( 1/n\right) \sum_{i=1}^{n}h\left( X_{i}\right) =\left( 1/n\right)
	\sum_{i=1}^{n}\ell \left( h,X_{i}\right) $, empirical risk of $h\in \mathcal{%
		H}$ \\ \hline
	$L_{\min }$ & $L_{\min }=$ ess$~\inf_{h\in \mathcal{H}}L\left( h\right) $,
	global risk minimum \\ \hline
	$\hat{L}_{\min }\left( \mathbf{x}\right) $ & $\hat{L}_{\min }\left( \mathbf{x%
	}\right) =$ ess$~\inf_{h\in \mathcal{H}}L\left( h,\mathbf{x}\right) $,
	global empirical risk minimum \\ \hline
	$\mathcal{H}_{\min }$ & $\mathcal{H}_{\min }=\left\{ h:L\left( h\right)
	=L_{\min }\right\} $, set of risk minimizers \\ \hline
	$\widehat{\mathcal{H}}_{\min }\left( \mathbf{x}\right) $ & $\widehat{%
		\mathcal{H}}_{\min }\left( \mathbf{x}\right) =\left\{ h:L\left( h,\mathbf{x}%
	\right) =\hat{L}_{\min }\left( \mathbf{x}\right) \right\} $, set of
	empirical risk minimizers \\ \hline
	$\varphi \left( r\right) $ & $\varphi \left( r\right) =\pi \left\{ g:L\left(
	g\right) \leq r\right\} $, cumulative distribution function of true loss \\ 
	\hline
	$\hat{\varphi}\left( r,\mathbf{x}\right) $ & $\hat{\varphi}\left( r,\mathbf{x%
	}\right) =\pi \left\{ g:\hat{L}\left( g,\mathbf{x}\right) \leq r\right\} $,
	cumulative distribution function of empirical loss \\ \hline
	$\beta $ & inverse temperature \\ \hline
	$Z_{\beta }\left( \mathbf{x}\right) $ & $Z_{\beta }\left( \mathbf{x}\right)
	=\int_{\mathcal{H}}e^{-\beta \hat{L}\left( h,\mathbf{x}\right) }d\pi \left(
	h\right) $, partition function \\ \hline
	$\hat{G}_{\beta }\left( \mathbf{x}\right) $ & $\hat{G}_{\beta }\left( 
	\mathbf{x}\right) =Z_{\beta }\left( \mathbf{x}\right) ^{-1}e^{-\beta \hat{L}%
		\left( h,\mathbf{x}\right) }d\pi \left( h\right) $, Gibbs posterior \\ \hline
	$\rho $ & $\rho \left( A\right) =\mathbb{E}_{\mathbf{x}}\mathbb{E}_{h\sim 
		\hat{G}_{\beta }\left( \mathbf{x}\right) }\left[ 1_{A}\left( h,\mathbf{x}%
	\right) \right] $, joint distribution of $\mathbf{x}$ and $\hat{G}_{\beta
	}\left( \mathbf{x}\right) $ on $\mathcal{H\times X}^{n}$ \\ \hline
	$\Lambda _{\beta }\left( h,\mathbf{x}\right) $ & $\Lambda _{\beta }\left( h,%
	\mathbf{x}\right) =\inf_{r\in \mathbb{R}}\beta r+\ln \frac{1}{\pi \left\{ g:%
		\hat{L}\left( g,\mathbf{x}\right) \leq \hat{L}\left( h,\mathbf{x}\right)
		+r\right\} }$, complexity measure \\ \hline
	$\kappa $ & $\kappa \left( p,q\right) =p\ln \frac{p}{q}+\left( 1-p\right)
	\ln \frac{1-p}{1-q}$, relative entropy of $p$- and $q$-Bernoulli variables
	\\ \hline
	$\left\vert A\right\vert $ & cardinality of set \\ \hline
	$1_{A}$ & indicator function of set \\ \hline
	$\left\Vert .\right\Vert _{\psi _{2}}$ & sub-Gaussian norm (see Sec. 2.5.2
	in \cite{vershynin2018high}) \\ \hline
	$\left\Vert .\right\Vert _{\psi _{1}}$ & sub-exponential norm (see Sec. 2.7
	in \cite{vershynin2018high}) \\ \hline
\end{tabular}
\bigskip 

\newpage

\section{Markov's inequality and integral probability metrics\label{Section
		Markov inequality}}

\begin{definition}
	\label{Definition Integral probability metric}Let $\left( \mathcal{Y},\Xi
	\right) $ be a measurable space. If $\mathcal{F}$ is a set of measurable
	real valued functions on $\left( \mathcal{Y},\Xi \right) $ the integral
	probability metric is the metric $\Delta _{\mathcal{F}}$\ on $\mathcal{P}%
	\left( \mathcal{Y}\right) $ defined by 
	\begin{equation*}
		\Delta _{\mathcal{F}}\left( \zeta ,\xi \right) =\sup_{f\in \mathcal{F}%
		}\left\vert \mathbb{E}_{y\sim \zeta }f\left( y\right) -\mathbb{E}_{y\sim \xi
		}f\left( y\right) \right\vert .
	\end{equation*}
\end{definition}

\begin{lemma}
	\label{Markov inequality}(i) Let $Y$ be real random variable and $\delta >0$%
	. Then%
	\begin{equation*}
		\Pr \left\{ Y>\ln \mathbb{E}\left[ e^{Y}\right] +\ln \left( 1/\delta \right)
		\right\} <\delta .
	\end{equation*}
	
	(ii) Let $f$ be a measurable real function on $\left( \mathcal{Y},\Xi
	\right) $, $\mathcal{F}$ ia set of measurable real valued functions on $%
	\left( \mathcal{Y},\Xi \right) $ and $\zeta $,$\xi \in \mathcal{P}\left( 
	\mathcal{Y}\right) $. Then%
	\begin{equation*}
		\Pr_{y\sim \zeta }\left\{ f\left( y\right) >\ln \mathbb{E}_{y\sim \xi }\left[
		e^{f\left( y\right) }\right] +\sup \left\{ \gamma :\frac{e^{f\left( y\right)
		}}{\gamma }\in \mathcal{F}\right\} \Delta _{\mathcal{F}}\left( \zeta ,\xi
		\right) +\ln \left( 1/\delta \right) \right\} .
	\end{equation*}
\end{lemma}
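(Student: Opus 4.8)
The plan is to derive both parts from Markov's inequality, obtaining (ii) from (i) by transporting the exponential moment from $\xi$ to $\zeta$ through the defining inequality of the integral probability metric. For part (i): if $\mathbb{E}[e^{Y}]=\infty$ the threshold is $+\infty$ and the event is empty, so assume $\mathbb{E}[e^{Y}]<\infty$ and set $t=\mathbb{E}[e^{Y}]/\delta$, so that $\ln t=\ln\mathbb{E}[e^{Y}]+\ln(1/\delta)$ and the event in the lemma is precisely $\{e^{Y}>t\}$. If this event is null we are done; otherwise $e^{Y}-t$ is strictly positive on a set of positive probability, so $\mathbb{E}[e^{Y}]\ge\mathbb{E}[e^{Y}1_{\{e^{Y}>t\}}]>t\,\Pr\{e^{Y}>t\}$, and dividing by $t$ gives the strict inequality $\Pr\{e^{Y}>t\}<\delta$. (The same conclusion follows just as well from the pointwise strict bound $1_{\{Y>\tau\}}<e^{Y-\tau}$, valid for every $\tau\in\mathbb{R}$.)

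For part (ii) I would first dispose of the degenerate cases: if $\mathbb{E}_{\xi}[e^{f}]=\infty$, or no $\gamma>0$ satisfies $e^{f}/\gamma\in\mathcal{F}$, or $\Delta_{\mathcal{F}}(\zeta,\xi)=\infty$, then the claimed threshold equals $+\infty$ and there is nothing to prove. Otherwise fix any admissible $\gamma>0$. By the definition of $\Delta_{\mathcal{F}}$ we have $\mathbb{E}_{\zeta}[e^{f}/\gamma]-\mathbb{E}_{\xi}[e^{f}/\gamma]\le\Delta_{\mathcal{F}}(\zeta,\xi)$, hence $\mathbb{E}_{\zeta}[e^{f}]\le\mathbb{E}_{\xi}[e^{f}]+\gamma\,\Delta_{\mathcal{F}}(\zeta,\xi)<\infty$. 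Applying part (i) to $Y=f$ under the law $\zeta$ bounds $\Pr_{y\sim\zeta}\{f(y)>\ln\mathbb{E}_{\zeta}[e^{f}]+\ln(1/\delta)\}$ by $\delta$, so it remains to check that the threshold stated in the lemma dominates $\ln\mathbb{E}_{\zeta}[e^{f}]+\ln(1/\delta)$. Here one uses monotonicity of $\ln$ together with $\ln(a+b)\le\ln a+b/a$ (and $\le\ln a+b$ once $a\ge1$) applied with $a=\mathbb{E}_{\xi}[e^{f}]$, $b=\gamma\,\Delta_{\mathcal{F}}(\zeta,\xi)$: this yields $\ln\mathbb{E}_{\zeta}[e^{f}]\le\ln\mathbb{E}_{\xi}[e^{f}]+\gamma\,\Delta_{\mathcal{F}}(\zeta,\xi)$ in the regime $\mathbb{E}_{\xi}[e^{f}]\ge1$ that occurs in the applications of the lemma in this paper (e.g.\ when $f$ is the random variable of the proof of Theorem \ref{Theorem Main} with the nonnegative $F$ of Corollary \ref{Corollary kl}). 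Finally, replacing $\gamma$ by $\sup\{\gamma:e^{f}/\gamma\in\mathcal{F}\}$ only enlarges the right-hand side, so the claimed inequality holds a fortiori.

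The genuinely computational content is light: part (i) is a one-line Markov estimate, and part (ii) is its composition with the defining inequality of the IPM. The only point needing care is the bookkeeping around the logarithm in part (ii) — passing from the additive control $\mathbb{E}_{\zeta}[e^{f}]\le\mathbb{E}_{\xi}[e^{f}]+\gamma\,\Delta_{\mathcal{F}}(\zeta,\xi)$ to an additive control of $\ln\mathbb{E}_{\zeta}[e^{f}]$ — together with keeping the degenerate cases in view (infinite exponential moments, null events, an empty or unbounded set of admissible scalings $\gamma$). I do not expect any substantive obstacle.
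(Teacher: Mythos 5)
Your proof takes the same route as the paper's: part (i) is strict Markov applied to $e^{Y}$, and part (ii) combines the IPM definition with $\ln(a+b)\le\ln a+b/a$ and then invokes (i). You are in fact slightly more careful than the paper on one point worth highlighting. The paper's proof of (ii) ends with the inequality
\begin{equation*}
\ln \mathbb{E}_{y\sim \zeta }\bigl[ e^{f\left( y\right) }\bigr] \leq \ln
\mathbb{E}_{y\sim \xi }\bigl[ e^{f\left( y\right) }\bigr] +\frac{\sup
\bigl\{ \gamma :\frac{e^{f}}{\gamma }\in \mathcal{F}\bigr\} }{\mathbb{E}_{y\sim \xi }\bigl[ e^{f\left( y\right) }\bigr] }\Delta _{
\mathcal{F}}\left( \zeta ,\xi \right)
\end{equation*}
and then applies (i). The threshold this produces carries an extra factor $1/\mathbb{E}_{\xi }[ e^{f}]$ on the $\Delta_{\mathcal{F}}$-term, so it agrees with the lemma as stated only when $\mathbb{E}_{\xi }[ e^{f}]\geq 1$; you make this hypothesis explicit and note that it holds in the paper's applications (e.g.\ because $F\geq 0$ in Corollary \ref{Corollary kl} forces $\mathbb{E}_{\xi}[e^{f}]\geq 1$). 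Two further minor observations: the statement of Lemma \ref{Markov inequality} (ii) in the paper is missing the final ``$<\delta$'' (clearly a typo, which you correctly supply), and your handling of the degenerate cases (infinite exponential moment, no admissible $\gamma$, $\Delta_{\mathcal{F}}=\infty$) is sound and makes the argument airtight where the paper leaves these implicit.
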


\begin{proof}
	(i) $\Pr \left\{ Y>\ln \mathbb{E}\left[ e^{Y}\right] +\ln \left( 1/\delta
	\right) \right\} =\Pr \left\{ e^{Y}>\frac{\mathbb{E}\left[ e^{Y}\right] }{%
		\delta }\right\} <\frac{\mathbb{E}\left[ e^{Y}\right] }{\mathbb{E}\left[
		e^{Y}\right] /\delta }=\delta $, where the inequality is just Markov's
	inequality in its usual form.
\end{proof}

(ii) We have 
\begin{equation*}
	\mathbb{E}_{y\sim \zeta }\left[ e^{f\left( y\right) }\right] \leq \mathbb{E}%
	_{y\sim \xi }\left[ e^{f\left( y\right) }\right] +\sup \left\{ \gamma :\frac{%
		e^{f\left( y\right) }}{\gamma }\in \mathcal{F}\right\} \Delta _{\mathcal{F}%
	}\left( \zeta ,\xi \right) .
\end{equation*}%
Using $\ln \left( a+b\right) \leq \ln a+\frac{b}{a}$ for $a,b>0$ we get 
\begin{equation*}
	\ln \mathbb{E}_{y\sim \zeta }\left[ e^{f\left( y\right) }\right] \leq \ln 
	\mathbb{E}_{y\sim \xi }\left[ e^{f\left( y\right) }\right] +\frac{\sup
		\left\{ \gamma :\frac{e^{f\left( y\right) }}{\gamma }\in \mathcal{F}\right\} 
	}{\mathbb{E}_{y\sim \xi }\left[ e^{f\left( y\right) }\right] }\Delta _{%
		\mathcal{F}}\left( \zeta ,\xi \right) .
\end{equation*}%
Then use (i).

\section{Proof of Proposition \protect\ref{Proposition compare distribution
		functions}\label{Section proof of proposition compare distribution functions}%
}

\begin{proposition}[Restatement of Proposition \protect\ref{Proposition
		compare distribution functions}]
	Let $\delta >0$ and $p\in \mathbb{N}$. Set%
	\begin{equation*}
		s\left( n,\delta ,p\right) :=\sqrt{\frac{\ln \left( \left( 1+n^{2p+1}\right)
				/\delta \right) }{2n}}.
	\end{equation*}
	
	(i) With probability at least $1-\delta $ in $\mathbf{x}\sim \mu ^{n}$ we
	have for all $r\in \mathbb{R}$ that%
	\begin{equation*}
		\hat{\varphi}\left( r+s\left( n,\delta ,p\right) ,\mathbf{x}\right) \geq
		\varphi \left( r\right) -n^{-p}s\left( n,\delta ,p\right) .
	\end{equation*}
	
	(ii) With probability at least $1-\delta $ in $\mathbf{x}\sim \mu ^{n}$ we
	have for all $r\in \mathbb{R}$ that%
	\begin{equation*}
		\varphi \left( r+s\left( n,\delta ,p\right) \right) \geq \hat{\varphi}\left(
		r,\mathbf{x}\right) -n^{-p}s\left( n,\delta ,p\right) .
	\end{equation*}
\end{proposition}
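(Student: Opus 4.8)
The plan is to prove (i) and (ii) by one two-step recipe: a deterministic set-inclusion that reduces each uniform-in-$r$ statement to a single concentration event that does \emph{not} depend on $r$, followed by a Markov/union-bound estimate for that event. Throughout write $s=s(n,\delta,p)$, $N=1+n^{2p+1}$, and use that the loss takes values in $[0,1]$, which is what makes Hoeffding's inequality available; note also that the claim is vacuous unless $\delta\le 1$, so that case may be assumed.

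For (i), fix $\mathbf{x}$ and set $W(\mathbf{x}):=\pi\{g:\hat L(g,\mathbf{x})-L(g)>s\}$, a number not depending on $r$. If $L(g)\le r$ and $\hat L(g,\mathbf{x})-L(g)\le s$ then $\hat L(g,\mathbf{x})\le r+s$, hence $\{g:L(g)\le r\}\setminus\{g:\hat L(g,\mathbf{x})-L(g)>s\}\subseteq\{g:\hat L(g,\mathbf{x})\le r+s\}$, and subadditivity of $\pi$ gives $\hat\varphi(r+s,\mathbf{x})\ge\varphi(r)-W(\mathbf{x})$ simultaneously for every $r\in\mathbb{R}$. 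So it suffices to show $\mu^n\{\mathbf{x}:W(\mathbf{x})>n^{-p}s\}\le\delta$. Here the trials enter: by Fubini and Hoeffding, $\mathbb{E}_{\mathbf{x}}W(\mathbf{x})=\mathbb{E}_{g\sim\pi}\,\mu^n\{\hat L(g,\mathbf{x})-L(g)>s\}\le e^{-2ns^2}$; equivalently, a union bound over $N$ independent draws $h_k\sim\pi$ controls $\Pr\{\exists k:\hat L(h_k,\mathbf{x})-L(h_k)>s\}\le Ne^{-2ns^2}$, and the number $N=1+n^{2p+1}$ is chosen precisely so that $e^{-2ns^2}=\delta/(1+n^{2p+1})$. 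By Markov's inequality, $\mu^n\{W(\mathbf{x})>n^{-p}s\}\le n^{p}e^{-2ns^2}/s=\dfrac{n^{p}}{s\,(1+n^{2p+1})}\,\delta$, so it remains to check $n^{p}\le s\,(1+n^{2p+1})$. Writing $u=n^{2p+1}\ge1$, this squares to $\bigl(\tfrac{n^p}{1+u}\bigr)^2\le s^2=\tfrac{\ln((1+u)/\delta)}{2n}$, i.e.\ $\tfrac{2u}{(1+u)^2}\le\ln((1+u)/\delta)$; since $\ln((1+u)/\delta)\ge\ln(1+u)\ge\ln 2>\tfrac12\ge\tfrac{2u}{(1+u)^2}$ (the last step by $(1+u)^2\ge 4u$), this holds.

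Part (ii) is the mirror image: put $W'(\mathbf{x}):=\pi\{g:L(g)-\hat L(g,\mathbf{x})>s\}$, observe that $\hat L(g,\mathbf{x})\le r$ together with $L(g)-\hat L(g,\mathbf{x})\le s$ forces $L(g)\le r+s$, so $\varphi(r+s)\ge\hat\varphi(r,\mathbf{x})-W'(\mathbf{x})$ for all $r$, bound $\mathbb{E}_{\mathbf{x}}W'(\mathbf{x})\le e^{-2ns^2}$ by Hoeffding's lower-tail inequality, and finish with the identical Markov step.

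The conceptual point — and the thing one must get right — is that the bad set whose prior mass is controlled involves only $s$, not $r$, so no union bound over the continuum of thresholds $r$ is needed; the only union bound is over the $N$ auxiliary prior draws. The residual obstacles are purely routine: the measurability/Fubini justification for interchanging $\mathbb{E}_{\mathbf{x}}$ and $\mathbb{E}_{g\sim\pi}$ (requiring $\ell$ jointly measurable and $g\mapsto L(g)$ measurable), and the elementary inequality $\tfrac{2u}{(1+u)^2}\le\ln(1+u)$ that lets the $n^{-p}$ shrinkage of the measure be absorbed into the explicit form of $s$. I expect no deeper difficulty.
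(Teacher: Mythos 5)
Your proof is correct and reaches the stated $s(n,\delta,p)$, but it takes a genuinely different route from the paper's. Both arguments start from the same $r$-free set inclusion $\{g:L(g)\le r\}\setminus\{g:\hat L(g,\mathbf x)-L(g)>s\}\subseteq\{g:\hat L(g,\mathbf x)\le r+s\}$, so uniformity over $r$ comes for free in both cases. The divergence is in how the $r$-independent quantity $W(\mathbf x)=\pi\{g:\hat L(g,\mathbf x)-L(g)>s\}$ is controlled. The paper introduces an auxiliary i.i.d.\ sample $\mathbf h\sim\pi^K$ with $K=n^{2p+1}$, splits by a union bound into the event that the empirical frequency over the $K$ draws fails to approximate $W(\mathbf x)$ (Hoeffding in $\mathbf h$, cost $e^{-2Kt^2}$) and the event that some draw deviates (Hoeffding in $\mathbf x$ plus a union bound over $k$, cost $Ke^{-2ns^2}$), arriving at failure probability $(1+K)e^{-2ns^2}$. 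You instead compute $\mathbb E_{\mathbf x}W(\mathbf x)=\mathbb E_{g\sim\pi}\,\mu^n\{\hat L(g,\mathbf x)-L(g)>s\}\le e^{-2ns^2}$ by Fubini and Hoeffding alone, then finish with Markov's inequality on $W$, getting failure probability $n^pe^{-2ns^2}/s$; plugging in the stated $s$ reduces the claim to the elementary check $n^p\le s\,(1+n^{2p+1})$, which you verify soundly. Your route is shorter, avoids the auxiliary sample entirely, and in fact yields a tighter intermediate estimate than the paper's $(1+K)e^{-2ns^2}$. Two small caveats: the ``equivalently, a union bound over $N$ draws'' aside is not actually equivalent to your Markov step (it is a description of the paper's method, and the two estimates differ), and both your proof and the paper's implicitly require $\ell$ to take values in $[0,1]$ for Hoeffding, a hypothesis the proposition as stated does not make explicit — but this is a shared omission, not a gap you introduced.
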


\begin{proof}
	Let $s,t>0$. Then, writing out the probability measure on $\mathcal{X}^{n}$
	as $\mu ^{n}$, we have for all $r\in \mathbb{R}$%
	\begin{align*}
		& \Pr_{\mathbf{x}\sim \mu ^{n}}\left\{ \hat{\varphi}\left( r+s,\mathbf{x}%
		\right) \geq \varphi \left( r\right) -t\right\}  \\
		& =\mu ^{n}\left\{ \mathbf{x}:\pi \left\{ h:\hat{L}\left( h,\mathbf{x}%
		\right) \leq r+s\right\} <\pi \left\{ L\left( h\right) \leq r\right\}
		-t\right\}  \\
		& \leq \mu ^{n}\left\{ \mathbf{x}:\pi \left\{ h:\hat{L}\left( h,\mathbf{x}%
		\right) \leq r+s\wedge L\left( h\right) \leq r\right\} <\pi \left\{ L\left(
		h\right) <r\right\} -t\right\}  \\
		& =\mu ^{n}\left\{ \mathbf{x}:\pi \left\{ L\left( h\right) \leq r\right\}
		-\pi \left\{ h:\hat{L}\left( h,\mathbf{x}\right) >r+s\wedge L\left( h\right)
		\leq r\right\} <\pi \left\{ L\left( h\right) \leq r\right\} -t\right\}  \\
		& =\mu ^{n}\left\{ \mathbf{x}:\pi \left\{ h:\hat{L}\left( h,\mathbf{x}%
		\right) >r+s\wedge L\left( h\right) \leq r\right\} >t\right\}  \\
		& \leq \mu ^{n}\left\{ \mathbf{x}:\pi \left\{ h:\hat{L}\left( h,\mathbf{x}%
		\right) -L\left( h\right) >s\right\} >t\right\} .
	\end{align*}%
	Note that $r$ has disappeared from the last expression. Now let $K\in 
	\mathbb{N}$ and use the fact that $\pi $ is a probability measure and
	introduce an iid $K$-sample of hypotheses $\mathbf{h}\sim \pi ^{K}$ to
	approximate $\pi $.%
	\begin{align*}
		& \mu ^{n}\left\{ \mathbf{x}:\pi \left\{ h:\hat{L}\left( h,\mathbf{x}\right)
		-L\left( h\right) >s\right\} >t\right\}  \\
		& =\pi ^{K}\times \mu ^{n}\left\{ \left( \mathbf{h},\mathbf{x}\right) :\pi
		\left\{ h:\hat{L}\left( h,\mathbf{x}\right) -L\left( h\right) >s\right\}
		>t\right\}  \\
		& \leq \pi ^{K}\times \mu ^{n}\left\{ \left( \mathbf{h},\mathbf{x}\right)
		:\pi \left\{ h:\hat{L}\left( h,\mathbf{x}\right) -L\left( h\right)
		>s\right\} -\frac{1}{K}\left\vert \left\{ k:\hat{L}\left( h_{k},\mathbf{x}%
		\right) -L\left( h_{k}\right) >s\right\} \right\vert >t\right\}  \\
		& +\pi ^{K}\times \mu ^{n}\left\{ \left( \mathbf{h},\mathbf{x}\right) :\frac{%
			1}{K}\left\vert \left\{ k:\hat{L}\left( h_{k},\mathbf{x}\right) -L\left(
		h_{k}\right) >s\right\} \right\vert >0\right\}  \\
		& \leq e^{-2Kt^{2}}+Ke^{-2ns^{2}}.
	\end{align*}%
	The first inequality is a union bound. Then the first probability is bounded
	with Hoeffding's inequality applied to $\mathbf{h}$, which gives $e^{-Kt^{2}}
	$. The event in the second probability is contained in the union of $K$
	events $\left\{ \hat{L}\left( h_{k},\mathbf{x}\right) -L\left( h_{k}\right)
	>s\right\} $ and is bounded by $Ke^{-2ns^{2}}$ using Hoeffding's inequality
	applied to $\mathbf{x}$ in combination with a union bound. The argument
	depends crucially on the independence of $\mu $ and $\pi $. Combining the
	previous two displays gives 
	\begin{equation*}
		\Pr_{\mathbf{x}}\left\{ \pi \left\{ h:\hat{L}\left( h,\mathbf{x}\right) \leq
		r+s\right\} <\pi \left\{ L\left( h\right) \leq r\right\} -t\right\} \leq
		e^{-2Kt^{2}}+Ke^{-2ns^{2}}.
	\end{equation*}%
	Now we set $K=n^{2p+1}$, and set $t=n^{-p}s$. Then the probability above
	becomes $\left( 1+n^{2p+1}\right) e^{-2ns^{2}}$ and equating it to $\delta $
	and solving for $s$ gives $s=s\left( n,\delta ,p\right) $ and $%
	t=n^{-p}s\left( n,\delta ,p\right) $. This completes the proof of (i).
	Exchanging the roles of $L\left( h\right) $ and $\hat{L}\left( h,\mathbf{x}%
	\right) $ in this argument gives (ii).
\end{proof}

\section{Proof of Corollary \protect\ref{Corollary Stratify}\label{Appendix
		stratify}}

We reproduce Lemma 15.6 in \cite{Anthony99}.

\begin{lemma}
	\label{Lemma model selection}(Lemma 15.6 in \cite{Anthony99}) Suppose $\Pr $
	is a probability distribution and%
	\begin{equation*}
		\left\{ E\left( \alpha _{1},\alpha _{2},\delta \right) :0<\alpha _{1},\alpha
		_{2},\delta \leq 1\right\}
	\end{equation*}%
	is a set of events, such that
	
	(i) For all $0<\alpha \leq 1$ and $0<\delta \leq 1$,%
	\begin{equation*}
		\Pr \left\{ E\left( \alpha ,\alpha ,\delta \right) \right\} \leq \delta .
	\end{equation*}
	
	(ii) For all $0<\alpha _{1}\leq \alpha \leq \alpha _{2}\leq 1$ and $0<\delta
	_{1}\leq \delta \leq 1$%
	\begin{equation*}
		E\left( \alpha _{1},\alpha _{2},\delta _{1}\right) \subseteq E\left( \alpha
		,\alpha ,\delta \right) .
	\end{equation*}%
	Then for $0<a,\delta <1$,%
	\begin{equation*}
		\Pr \bigcup_{\alpha \in \left( 0,1\right] }E\left( \alpha a,\alpha ,\delta
		\alpha \left( 1-a\right) \right) \leq \delta .
	\end{equation*}
\end{lemma}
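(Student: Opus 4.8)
The plan is the classical geometric \emph{peeling} (or stratification) argument. The idea is to cover the continuum of events $\left\{ E\left( \alpha a,\alpha ,\delta \alpha \left( 1-a\right) \right) :\alpha \in \left( 0,1\right] \right\}$ by a countable subfamily living on a geometric grid of ratio $a$, using the monotonicity hypothesis (ii) to dominate each piece of the continuum by a single grid event, and then summing the individual bounds supplied by (i) as a geometric series.

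First I would fix the grid $\alpha _{j}=a^{j}$, $j=0,1,2,\dots$, so that the half-open intervals $\left( a^{j+1},a^{j}\right]$ partition $\left( 0,1\right]$. Fixing $j$ and taking any $\alpha \in \left( a^{j+1},a^{j}\right]$, one has $\alpha a\leq a^{j+1}\leq \alpha$ and $\delta \alpha \left( 1-a\right) \leq \delta a^{j}\left( 1-a\right)$. Hence hypothesis (ii), applied with $\alpha _{1}=\alpha a$, $\alpha _{2}=\alpha$, the intermediate value $a^{j+1}$, and the enlarged confidence parameter $\delta a^{j}\left( 1-a\right)$ (which lies in $\left( 0,1\right]$ since $0<\delta ,a<1$), gives
\[
E\left( \alpha a,\alpha ,\delta \alpha \left( 1-a\right) \right) \subseteq E\left( a^{j+1},a^{j+1},\delta a^{j}\left( 1-a\right) \right) .
\]
Taking the union over $\alpha$ in this interval and then over all $j\geq 0$,
\[
\bigcup_{\alpha \in \left( 0,1\right] }E\left( \alpha a,\alpha ,\delta \alpha \left( 1-a\right) \right) \subseteq \bigcup_{j\geq 0}E\left( a^{j+1},a^{j+1},\delta a^{j}\left( 1-a\right) \right) .
\]

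Finally I would apply hypothesis (i) to each grid event to obtain $\Pr \left\{ E\left( a^{j+1},a^{j+1},\delta a^{j}\left( 1-a\right) \right) \right\} \leq \delta a^{j}\left( 1-a\right)$, take a union bound over $j$, and sum the geometric series:
\[
\Pr \bigcup_{\alpha \in \left( 0,1\right] }E\left( \alpha a,\alpha ,\delta \alpha \left( 1-a\right) \right) \leq \sum_{j\geq 0}\delta a^{j}\left( 1-a\right) =\delta \left( 1-a\right) \frac{1}{1-a}=\delta .
\]
The only point needing any care is the middle step: one must check that the \emph{same} grid point $a^{j+1}$ dominates $\alpha a$ from above and $\alpha$ from below simultaneously for every $\alpha \in \left( a^{j+1},a^{j}\right]$ — which is precisely why the grid ratio is taken equal to the parameter $a$ — and that $\delta a^{j}\left( 1-a\right)$ uniformly upper bounds $\delta \alpha \left( 1-a\right)$ on that interval. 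The remaining steps are the routine union bound and the summation $\sum_{j\geq 0}a^{j}=1/\left( 1-a\right)$.
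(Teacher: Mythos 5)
Your proof is correct, and since the paper itself does not re-prove this lemma but merely cites Lemma~15.6 of Anthony and Bartlett, the comparison is against the standard argument there --- which is exactly the geometric stratification you give. The one point worth checking carefully, and which you handled correctly, is that for $\alpha \in \left( a^{j+1},a^{j}\right]$ the single grid value $a^{j+1}$ sits between $\alpha a$ and $\alpha$ while $\delta a^{j}\left( 1-a\right)$ dominates $\delta \alpha \left( 1-a\right)$, so hypothesis (ii) applies with $\alpha _{1}=\alpha a$, $\alpha _{2}=\alpha $, the intermediate parameter $a^{j+1}$, and $\delta _{1}=\delta \alpha \left( 1-a\right) \leq \delta a^{j}\left( 1-a\right) \leq 1$; the geometric series $\sum _{j\geq 0}\delta a^{j}\left( 1-a\right) =\delta $ then closes the argument.
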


\begin{proof}[Proof of Corollary \protect\ref{Corollary Stratify}]
	By a standard subgaussian bound for iid random variables we have 
	\begin{equation*}
		\ln \mathbb{E}_{\mathbf{x}}\mathbb{E}_{h\sim \pi }\left[ e^{\lambda \left(
			L\left( h\right) -\hat{L}\left( h,\mathbf{x}\right) \right) }\right] =\ln 
		\mathbb{E}_{h\sim \pi }\mathbb{E}_{\mathbf{x}}\left[ e^{\lambda \left(
			L\left( h\right) -\hat{L}\left( h,\mathbf{x}\right) \right) }\right] \leq 
		\frac{\lambda ^{2}\sigma ^{2}}{2n}.
	\end{equation*}%
	For any $\alpha \in \left( 0,1\right] $ set $\lambda \left( \alpha \right) =%
	\sqrt{2n\left( \alpha ^{-1}+\ln \left( 1/\delta \right) \right) }/\sigma $
	and define the event 
	\begin{eqnarray*}
		E\left( \alpha _{1},\alpha _{2},\delta \right) &=&\left\{ \Lambda \left( h,%
		\mathbf{x}\right) \leq \alpha _{2}^{-1}\wedge \lambda \left( \alpha
		_{1}\right) \left( L\left( h\right) -\hat{L}\left( h,\mathbf{x}\right)
		\right) >\alpha _{1}^{-1}+\frac{\lambda \left( \alpha _{1}\right) ^{2}\sigma
			^{2}}{2n}+\ln \left( 1/\delta \right) \right\} \\
		&=&\left\{ \Lambda \left( h,\mathbf{x}\right) \leq \alpha _{2}^{-1}\wedge
		L\left( h\right) -\hat{L}\left( h,\mathbf{x}\right) >\sigma \sqrt{2\left( 
			\frac{\alpha _{1}^{-1}+\ln \left( 1/\delta \right) }{n}\right) }\right\} ,
	\end{eqnarray*}%
	where the second identity is obtained by division by $\lambda \left( \alpha
	_{1}\right) $ and substitution of its value. By the first line and Theorem 
	\textbf{\ref{Theorem Main}} this set of events satisfies (i) of Lemma \ref%
	{Lemma model selection}, and it is easy to verify that it also satisfies
	(ii). Then we use $a=1/2$ and the conclusion of Lemma \ref{Lemma model
		selection} gives after some simplifications 
	\begin{equation*}
		\Pr_{\left( h,\mathbf{x}\right) \sim \rho }\left\{ L\left( h\right) -\hat{L}%
		\left( h,\mathbf{x}\right) >2\sigma \sqrt{\frac{\max \left\{ \Lambda \left(
				h,\mathbf{x}\right) ,1\right\} +\ln \left( 2\Lambda \left( h,\mathbf{x}%
				\right) /\delta \right) /2}{n}}\right\} \leq \delta .
	\end{equation*}%
	A union bound with the same inequality for $\hat{L}\left( h,\mathbf{x}%
	\right) -L\left( h\right) $ concludes the proof.
\end{proof}

\section{Proof of Theorem \protect\ref{Theorem Beyond Gibbs}\label{Proof of
		Theorem beyond Gibbs}}

\begin{theorem}[Restatement of Theorem \protect\ref{Theorem Beyond Gibbs}]
	Suppose that there is a measurable function $q:\left[ 0,\infty \right)
	\times \mathcal{X}^{n}\rightarrow \left[ 0,\infty \right) $ such that for
	every $\mathbf{x}\in \mathcal{X}^{n}$
	
	(i) $\int_{\mathcal{H}}q\left( \hat{L}\left( h,\mathbf{x}\right) ,\mathbf{x}%
	\right) d\pi \left( h\right) =1$.
	
	(ii) $q\left( t,\mathbf{x}\right) $ is nonincreasing in $t\in \left[
	0,\infty \right) $.
	
	(iii) $\ln q\left( t,\mathbf{x}\right) -\ln q\left( s,\mathbf{x}\right) \leq
	\gamma \left( \mathbf{x}\right) \left\vert t-s\right\vert $ for $s,t\in %
	\left[ 0,\infty \right) $.
	
	Let $Q\left( \mathbf{x}\right) $ be the measure on $\Omega $ defined by $%
	Q\left( \mathbf{x}\right) \left( A\right) =\int_{A}q\left( \hat{L}\left( h,%
	\mathbf{x}\right) ,\mathbf{x}\right) d\pi \left( h\right) $. Then $Q\left( 
	\mathbf{x}\right) \in \mathcal{P}\left( \mathcal{H}\right) $ and for $F$ as
	in Theorem \textbf{\ref{Theorem Main}} and $\delta >0$ with probability at
	least $1-\delta $ as $x\sim \mu ^{n}$ and $h\sim Q\left( \mathbf{x}\right) $%
	\begin{equation*}
		F\left( h,\mathbf{x}\right) \leq \Lambda _{\gamma \left( \mathbf{x}\right)
		}\left( h,\mathbf{x}\right) +\ln \mathbb{E}_{\mathbf{x}}\mathbb{E}_{g\sim
			\pi }\left[ e^{F\left( g,\mathbf{x}\right) }\right] +\ln \left( 1/\delta
		\right) .
	\end{equation*}
\end{theorem}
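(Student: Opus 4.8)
The plan is to rerun the two-step argument behind Theorem~\ref{Theorem Main}, with the Gibbs density $e^{-\beta\hat{L}(h,\mathbf{x})}/Z_{\beta}(\mathbf{x})$ replaced by $q(\hat{L}(h,\mathbf{x}),\mathbf{x})$ and the partition-function lower bound replaced by an estimate built from hypotheses (i)--(iii). First, since $q\geq 0$ and (i) reads $Q(\mathbf{x})(\mathcal{H})=1$, the set function $Q(\mathbf{x})$ is a probability measure on $\Omega$ with density $dQ(\mathbf{x})/d\pi=q(\hat{L}(\cdot,\mathbf{x}),\mathbf{x})$; I would then define the joint law $\rho_{Q}$ on $\mathcal{H}\times\mathcal{X}^{n}$ by $\rho_{Q}(A)=\mathbb{E}_{\mathbf{x}}\mathbb{E}_{h\sim Q(\mathbf{x})}[1_{A}(h,\mathbf{x})]$, exactly as $\rho$ was defined in~(\ref{Define rho}).

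The first step is to apply Lemma~\ref{Markov inequality}~(i) on $(\mathcal{H}\times\mathcal{X}^{n},\Omega\otimes\Sigma^{\otimes n},\rho_{Q})$ to $Y=F(h,\mathbf{x})-\ln q(\hat{L}(h,\mathbf{x}),\mathbf{x})$. The subtracted term $\ln q$ plays the role of $\beta\hat{L}(h,\mathbf{x})+\ln Z_{\beta}(\mathbf{x})$ in the proof of Theorem~\ref{Theorem Main}: it is the logarithm of the density of $Q(\mathbf{x})$ with respect to $\pi$, so changing the inner expectation from $Q(\mathbf{x})$ to $\pi$ cancels it, giving $\mathbb{E}_{(h,\mathbf{x})\sim\rho_{Q}}[e^{Y}]\leq\mathbb{E}_{\mathbf{x}}\mathbb{E}_{g\sim\pi}[e^{F(g,\mathbf{x})}]$ (an equality unless $q$ vanishes on a set of positive $\pi$-measure, which only helps). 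Markov then gives, with probability at least $1-\delta$ as $\mathbf{x}\sim\mu^{n}$ and $h\sim Q(\mathbf{x})$, the analogue of~(\ref{Main inequality final form}), namely $F(h,\mathbf{x})\leq\ln q(\hat{L}(h,\mathbf{x}),\mathbf{x})+\ln\mathbb{E}_{\mathbf{x}}\mathbb{E}_{g\sim\pi}[e^{F(g,\mathbf{x})}]+\ln(1/\delta)$.

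The second step is the deterministic inequality $\ln q(\hat{L}(h,\mathbf{x}),\mathbf{x})\leq\Lambda_{\gamma(\mathbf{x})}(h,\mathbf{x})$, which substituted into the previous line and combined with~(\ref{Define Lambda}) finishes the proof. For any $c\geq 0$, conditions (i) and (ii) give $1=\int_{\mathcal{H}}q(\hat{L}(g,\mathbf{x}),\mathbf{x})\,d\pi(g)\geq\int_{\{g:\hat{L}(g,\mathbf{x})\leq c\}}q(\hat{L}(g,\mathbf{x}),\mathbf{x})\,d\pi(g)\geq q(c,\mathbf{x})\,\hat{\varphi}(c,\mathbf{x})$, hence $\ln q(c,\mathbf{x})\leq\ln(1/\hat{\varphi}(c,\mathbf{x}))$; this replaces the estimate $Z_{\beta}(\mathbf{x})\geq e^{-\beta c}\hat{\varphi}(c,\mathbf{x})$ used for Theorem~\ref{Theorem Main}. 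Fixing $r$, putting $c=\hat{L}(h,\mathbf{x})+r$ (the claim being vacuous whenever $\hat{\varphi}(c,\mathbf{x})=0$, in particular when $c<0$), and transporting this estimate from $c$ back to $\hat{L}(h,\mathbf{x})$ via the log-Lipschitz condition (iii) yields $\ln q(\hat{L}(h,\mathbf{x}),\mathbf{x})\leq\ln q(c,\mathbf{x})+\gamma(\mathbf{x})|r|\leq\gamma(\mathbf{x})|r|+\ln(1/\hat{\varphi}(\hat{L}(h,\mathbf{x})+r,\mathbf{x}))$; for $r\geq 0$ this is $\gamma(\mathbf{x})r+\ln(1/\hat{\varphi}(\hat{L}(h,\mathbf{x})+r,\mathbf{x}))$, and taking the infimum over $r$ reproduces $\Lambda_{\gamma(\mathbf{x})}(h,\mathbf{x})$.

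I expect the second step, and within it the precise matching of the coefficient of $r$, to be the main obstacle. Monotonicity (ii) alone controls $q$ only at the shifted argument $\hat{L}(h,\mathbf{x})+r$, and it is exactly the log-Lipschitz assumption (iii) that carries this control back to $\hat{L}(h,\mathbf{x})$ at the price $\gamma(\mathbf{x})|r|$, recovering the Gibbs estimate when $\gamma(\mathbf{x})=\beta$. The delicate point is the range $r<0$: the proof of Theorem~\ref{Theorem Main} could use negative $r$ because the Gibbs ratio $q(s,\mathbf{x})/q(t,\mathbf{x})=e^{-\beta(s-t)}$ is an exact equality, whereas here only the one-sided inequality (iii) is available, so one restricts the infimum in $\Lambda_{\gamma(\mathbf{x})}(h,\mathbf{x})$ to $r\geq 0$ (equivalently, uses $\gamma(\mathbf{x})|r|$ on the negative range) --- without effect in the low-empirical-loss regime of interest, where the optimal $r$ is non-negative. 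With that understood, the proof closes as above.
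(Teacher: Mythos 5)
Your argument is essentially the same two-step argument as the paper's own proof: apply Markov's inequality to $Y = F(h,\mathbf{x}) - \ln q(\hat{L}(h,\mathbf{x}),\mathbf{x})$ on the joint law $\rho_{Q}$, change the inner measure from $Q(\mathbf{x})$ to $\pi$ using the density $q(\hat{L}(\cdot,\mathbf{x}),\mathbf{x}) = dQ(\mathbf{x})/d\pi$, and then bound $\ln q(\hat{L}(h,\mathbf{x}),\mathbf{x})$ deterministically. Conditions (i) and (ii) give $q(c,\mathbf{x})\,\hat{\varphi}(c,\mathbf{x}) \leq 1$, hence $\ln q(c,\mathbf{x}) \leq -\ln\hat{\varphi}(c,\mathbf{x})$ at the shifted argument $c = \hat{L}(h,\mathbf{x})+r$, and (iii) carries this back to $\hat{L}(h,\mathbf{x})$ at the cost of a Lipschitz term. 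The order of steps, the choice of random variable, and the role of each hypothesis match the paper's proof.

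The one substantive point you raise is in fact a small oversight in the paper's proof, and you are right to flag it. Conditions (ii) and (iii) together give only
\begin{equation*}
\ln q\left(\hat{L}(h,\mathbf{x}),\mathbf{x}\right) - \ln q\left(\hat{L}(h,\mathbf{x})+r,\mathbf{x}\right) \leq \gamma(\mathbf{x})\,\lvert r\rvert,
\end{equation*}
whereas the paper's chain of inequalities writes $\gamma(\mathbf{x})\,r$ without the absolute value, which is only justified for $r\geq 0$. Since $\Lambda_{\gamma(\mathbf{x})}$ as defined in (\ref{Define Lambda}) takes the infimum over all $r\in\mathbb{R}$, and the optimizing $r$ can genuinely be negative (as noted in the caption of Figure \ref{Landscape picture}), what the proof --- yours and the paper's --- actually establishes is the bound with $\inf_{r\geq 0}$, or equivalently with $\gamma(\mathbf{x})\lvert r\rvert$ in place of $\gamma(\mathbf{x})r$, and this can be strictly weaker than $\Lambda_{\gamma(\mathbf{x})}(h,\mathbf{x})$. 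To obtain the theorem literally as stated one would need a lower bound on the decay rate of $q$, e.g.\ $\ln q(t,\mathbf{x})-\ln q(s,\mathbf{x})\leq\gamma(\mathbf{x})(s-t)$ for $s\leq t$, which holds with equality for the Gibbs density but is not implied by (ii) and (iii). Your proposed repair --- restrict the infimum to $r\geq 0$, noting this is harmless in the low-empirical-loss regime where the optimal $r$ is nonnegative --- is exactly the right fix and should be incorporated into the statement.
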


\begin{proof}
	By (i) $Q\left( \mathbf{x}\right) $ is a probability measure. Markov's
	inequality applied to $F\left( h,\mathbf{x}\right) -\ln q\left( \hat{L}%
	\left( h,\mathbf{x}\right) ,\mathbf{x}\right) $ gives with probability at
	least $1-\delta $ in $\mathbf{x}\sim \mu ^{n}$ and $h\sim Q\left( \mathbf{x}%
	\right) $%
	\begin{equation}
		F\left( h,\mathbf{x}\right) \leq \ln q\left( \hat{L}\left( h,\mathbf{x}%
		\right) ,\mathbf{x}\right) +\ln \mathbb{E}_{\mathbf{x}}\mathbb{E}_{g\sim \pi
		}\left[ e^{F\left( g,\mathbf{x}\right) }\right] +\ln \left( 1/\delta \right)
		.  \label{Ungl}
	\end{equation}%
	By (i) and (ii) we have for any $r$%
	\begin{eqnarray*}
		1 &=&\int_{\mathcal{H}}q\left( \hat{L}\left( g,\mathbf{x}\right) ,\mathbf{x}%
		\right) d\pi \left( g\right) \geq \int_{\left\{ g:\hat{L}\left( g,\mathbf{x}%
			\right) \leq \hat{L}\left( h,\mathbf{x}\right) +r\right\} }q\left( \hat{L}%
		\left( g,\mathbf{x}\right) ,\mathbf{x}\right) d\pi \left( g\right)  \\
		&\geq &q\left( \hat{L}\left( h,\mathbf{x}\right) +r,\mathbf{x}\right) \hat{%
			\varphi}\left( \hat{L}\left( h,\mathbf{x}\right) +r,\mathbf{x}\right) ,
	\end{eqnarray*}%
	so $\ln q\left( \hat{L}\left( h,\mathbf{x}\right) +r,\mathbf{x}\right) \leq
	-\ln \hat{\varphi}\left( \hat{L}\left( h,\mathbf{x}\right) +r,\mathbf{x}%
	\right) $. Thus%
	\begin{eqnarray*}
		\ln q\left( \hat{L}\left( h,\mathbf{x}\right) ,\mathbf{x}\right)  &=&\ln
		q\left( \hat{L}\left( h,\mathbf{x}\right) ,\mathbf{x}\right) -\ln q\left( 
		\hat{L}\left( h,\mathbf{x}\right) +r,\mathbf{x}\right) +\ln q\left( \hat{L}%
		\left( h,\mathbf{x}\right) +r,\mathbf{x}\right)  \\
		&\leq &\gamma \left( \mathbf{x}\right) r+\ln q\left( \hat{L}\left( h,\mathbf{%
			x}\right) +r,\mathbf{x}\right)  \\
		&\leq &\gamma \left( \mathbf{x}\right) r+\ln \frac{1}{\hat{\varphi}\left( 
			\hat{L}\left( h,\mathbf{x}\right) +r,\mathbf{x}\right) }.
	\end{eqnarray*}%
	Taking the infimum in $r$ and substitution in (\ref{Ungl}) complete the
	proof.
\end{proof}

\end{document}